\newcommand\sbullet[1][.5]{\mathbin{\vcenter{\hbox{\scalebox{#1}{$\bullet$}}}}}
\newcommand{\p}[1]{\left(#1\right)}
\newcommand{\pp}[1]{\left[#1\right]}
\newcommand{\ppp}[1]{\left\{#1\right\}}
\newcommand{\ceil}[1]{\left \lceil #1 \right \rceil}
\newcommand{\calA}{{\cal A}}
\newcommand{\calB}{{\cal B}}
\newcommand{\calC}{{\cal C}}
\newcommand{\calD}{{\cal D}}
\newcommand{\calE}{{\cal E}}
\newcommand{\calG}{{\cal G}}
\newcommand{\sN}{{\mathsf{N}}}
\newcommand{\sM}{{\mathsf{M}}}
\newcommand{\sK}{{\mathsf{K}}}
\newcommand{\sT}{{\mathsf{T}}}
\newcommand{\bp}{{\mathbf{p}}}
\newcommand{\calP}{{\cal P}}
\newcommand{\calS}{{\cal S}}
\newcommand{\calT}{{\cal T}}
\newcommand{\calV}{{\cal V}}
\newcommand{\s}[1]{\mathsf{#1}}
\newcommand{\bE}{\mathbb{E}}
\newcommand {\pr} {\mathbb{P}}
\newtheorem{example}{Example}
\newtheorem{theorem}{Theorem}
\newtheorem{lemma}{Lemma}
\newtheorem{rmk}{Remark}
\newcommand{\ca}[1]{\mathcal{#1}}
\newcommand{\f}[1]{\mathbf{#1}}
\newcommand{\bb}[1]{\mathbb{#1}}
\begin{document}

\title{Learning User Preferences in Non-Stationary Environments}
\date{}
\author{Wasim~Huleihel\thanks{W. Huleihel is with the Department of Electrical Engineering-Systems at Tel-Aviv university, {T}el-{A}viv 6997801, Israel (e-mail:  \texttt{wasimh@tauex.tau.ac.il}).} ~~~Soumyabrata~Pal\thanks{S. Pal is with the Computer Science Department at the University of Massachusetts Amherst, Amherst, MA 01003, USA (email: \texttt{spal@cs.umass.edu}).}
~~~Ofer~Shayevitz\thanks{O. Shayevitz is with the Department of Electrical Engineering-Systems at Tel-Aviv university, {T}el-{A}viv 6997801, Israel (email: \texttt{ofersha@eng.tau.ac.il}).}}
\maketitle

\begin{abstract}
Recommendation systems often use online collaborative filtering (CF) algorithms to identify items a given user likes over time, based on ratings that this user and a large number of other users have provided in the past. This problem has been studied extensively when users' preferences do not change over time (static case); an assumption that is often violated in practical settings. In this paper, we introduce a novel model for online non-stationary recommendation systems which allows for temporal uncertainties in the users' preferences. For this model, we propose a user-based CF algorithm, and provide a theoretical analysis of its achievable reward. Compared to related non-stationary multi-armed bandit literature, the main fundamental difficulty in our model lies in the fact that variations in the preferences of a certain user may affect the recommendations for other users severely. We also test our algorithm over real-world datasets, showing its effectiveness in real-world applications. One of the main surprising observations in our experiments is the fact our algorithm outperforms other static algorithms even when preferences do not change over time. This hints toward the general conclusion that in practice, dynamic algorithms, such as the one we propose, might be beneficial even in stationary environments.
\end{abstract}

\section{Introduction}

Recommendation systems provide users with appropriate items based on their revealed preferences such as ratings. Due to their wide applicability, recommendation systems have received significant attention in machine learning and data mining societies \cite{Si2003,Rennie2005,Salakhutdinov2007,Bell:2007,Koren:2008,Salakhutdinov2008,Jahrer:2010}. In practice, recommendation systems often employ \emph{collaborative filtering} (CF) \cite{Ekstrand:2011}, for recommending (potentially) liked items to a given user by considering items that other ``similar" users liked. There are two main categories of CF algorithms: \emph{user-based}, e.g., \cite{Resnick94grouplens,Herlocker:1999,Bresler:2014,Bellogin:2012,Heckel:2017}, and \emph{item-based}, e.g., \cite{Gregory98,Sarwar:2001,Linden03,Bresler:2016}, and many references therein. User-based algorithms exploit similarity in the user space by recommending a particular user $u$ those items which were liked by other similar users. Item-based algorithms, in contrast, exploit similarity in the item space by recommending items similar to those which were liked in the past by a particular user. 

Prevalent recommendation systems typically operate in an online fashion where items are recommended to users over time, and the obtained ratings are used for future recommendations. Typically, the goal in such a problem is to maximize the number of likes revealed at any time. This problem has been studied extensively, e.g., \cite{Bresler:2014,Bresler:2016,Heckel:2017,Mina2019}, always under the assumption that user's preferences do not vary over time. In practice, however, temporal changes in the structure of the user's preferences are an intrinsic characteristic of the problem, since users change their taste occasionally \cite{Hariri2000,Jiahui2010,Joshua13,Hariri14,Karahodza14,Shi15,Basile2015ModelingSP,Liu:2015,Mukherjee2017ItemRW,Eskandanian2018DetectingCI}. Ignoring these changes results in recommendation algorithms which are doomed to failure in practice. This sets the goal of this paper: \emph{we aim to model and understand the effect of non-stationary environment on online recommendation systems}.

To that end, we introduce a novel latent probabilistic model for online non-stationary recommendation systems, and analyze the reward of an online user-based algorithm. Our model and certain elements of the algorithm are inspired by related static models and algorithms studied in \cite{Bresler:2014,Bresler:2016,Heckel:2017}. In a nutshell, each user in our model has a latent probability preference vector which describes the extent to which he/she likes or dislikes each item. Similar users have similar preference vectors (this will be defined rigorously in the following section). At a given time step $t=1,2,\ldots$, the algorithm recommends a single item to each user, typically different for each user, and with probability specified by the corresponding preference vector, the user likes or dislikes the recommended item. Following \cite{Bresler:2014,Bresler:2016,Heckel:2017} we impose the constraint that an item that has been rated by a user, cannot be recommended to the same user again. This is due to the fact that in many applications, such as, recommendation of movies or books, a rating often corresponds to consuming an item, and there is little point in, e.g., recommending a product that has been previously purchased in the past for a second time, at least not immediately. While in certain applications, this constraint might be unnecessary/irrelevant, it \emph{forces} our algorithm to exploit the user-item structure for collaboration. In any case, repeating the same recommendations only makes the problem easier algorithmically, and the results of this paper can be generalized to account for this case as well. Finally, to model the non-stationarity in the users' preferences, we allow users to change their user-type over time.

\noindent\textbf{Main Contributions.} Despite the success of CF, there has been no theoretical development to justify its effectiveness in non-stationary environment. The main contributions of this paper are two-fold. First, we introduce a novel model for general online non-stationary recommendation systems where we generalize the stationary model introduced in \cite{Bresler:2014} by allowing arbitrary shifts in user preferences over time. 
Our second main contribution is a theoretical analysis of a user-based CF algorithm that maximize the number of recommendations that users like. As time evolves, our CF algorithm randomly explores batches of items, one batch at a time, in order to learn users' preferences of new items in each batch. By splitting the space into optimal number of batches, our algorithms can start exploiting without having learned the preferences of the users regarding at all items. Furthermore, in each batch, our algorithm tests for variations, and once a change in the preference of a certain user is detected, that user is removed from the exploitation steps. Our results allow us to quantify the ``price of non-stationarity", which mathematically captures the added complexity embedded in changing users' preferences versus stationary ones. The proposed algorithm achieve near-optimal performance relative to an oracle that recommends all likable items first. Our findings, such as the scaling of the cold-start time on the various parameters, and the effect of non-stationary environment on recommendation, can inform
the design of recommendation algorithms, and refine our understanding of the associated benefits and costs while designing a practical recommendation system.

\noindent\textbf{Related Work.} While to the best of our knowledge, this is the first work that \emph{analytically} studies temporal changes in the users' preferences, theoretical results have been established for the stationary setting where there are no changes in the user preferences over time. We next briefly survey these prior works. One of the first initial asymptotic theoretical results concerning user-based CF were established in \cite{biau2010}. Most related to our approach are the setups and algorithms studied in \cite{Bresler:2014,Bresler:2016,Heckel:2017,Mina2019}. In particular, \cite{Bresler:2014} analyzed a user-based algorithm for online two-class CF problem in a similar setting to ours, while a corresponding item-based algorithm was analyzed in \cite{Bresler:2016}. A probabilistic model in an online setup was studied in \cite{Dabeer13}, and \cite{Dabeer12} study a probabilistic model in an offline setup, and derived asymptotic optimal performance guarantees for two-class CF problem. Theoretical guarantees for a one-class models were derived in \cite{Heckel:2017}. Another related work is \cite{Montanari12}, who considered online recommendation systems in the context of linear multi-armed bandit (MAB) problem \cite{Bubeck12}. 

Our setup relates to some variants of the MAB problem. An inherent conceptual difference between our setup and standard MAB formulation \cite{Thompson33} is that in our case an item can be recommended to a user just once, while in MAB an item (or arm) is allowed to be recommended ceaselessly. In fact, the solution principle for MAB is to explore for the best item, and once found, keep exploiting (i.e., recommending) it \cite{Auer02,Bubeck12}. This observation applies also to clustered bandits \cite{mannor12}, or, bandits with dependent arms \cite{Pandey07}. Specifically, in these formulations the arms are grouped into clusters, and within each cluster arms are correlated. It is assumed, however, that the assignment of arms to clusters is known, while in our setup this information is not available. Another related formulation of MAB is ``sleeping bandits" \cite{Kleinberg08}, where the availability of arms vary over time in an adversarial manner, while in our setup, the sequence of items that are available is not adversarial. Finally, a more recent related version is the problem of MAB with non-stationary rewards, e.g., \cite{Hartland07,Garivier08,Mannor09,Besbes,Besbes14,Karnin16,Luo2017EfficientCB,Cao19,Chen2019ANA,Auer19a,newreview1,newreview2,newreview3,newreview4,newreview5,newreview6}. This formulation allows for a broad range of temporal uncertainties in the rewards. While the motivation of this setup is similar to ours, due to the same reasons as above, the results and methods in these papers are quite different from ours. In particular, the main fundamental difficulty in our model compared to MAB literature lies in the fact that variations in the preferences of a certain user may affect the recommendations for other users severely.
\section{Model and Learning Problem}\label{sec:probsetting}
In this section we introduce the model and the learning problem considered in this paper. We start with the static setting where users' preferences do not change over time, and then generalize to the dynamic setting.

\noindent\textbf{Static Model.}
Consider a system with $\sN$ users and $\sM$ items. A user may ``like" ($+1$) or ``dislike" ($-1$) an item. At each time step, each user is recommended an item that he has not consumed yet. 
Each user $u\in[\sN]\triangleq\{1,2,\ldots,\sN\}$ is associated with a \emph{latent} (unknown) preference vector $\mathbf{p}_u\in\pp{0,1}^{\sM}$, whose entries $p_{ui}$ are the probabilities of user $u$ liking item $i\in[\sM]$, independently across items. We assume that an item $i$ is either \emph{likable} for user $u$, i.e., $p_{ui}>1/2$, or \emph{unlikable}, i.e., $p_{ui}<1/2$. The reward earned by the recommendation system up to any time step is the total number of liked items that have been recommended so far across all users (a precise definition will be given in the sequel). Accordingly, to maximize this reward, clearly likable items for the user should be recommended before unlikable ones. For a particular item $i$, recommended to a user $u$, the observed rating is a random variable $\mathbf{R}_{ui}$, such that $\mathbf{R}_{ui}=1$ with probability $p_{ui}$, and $\mathbf{R}_{ui}=-1$ with probability $1-p_{ui}$. The ratings are assumed random in order to model the fact that users are not fully consistent in their ratings. A CF algorithm operates as follows: at each time step $t=0,1,\ldots$, the algorithm recommends a single item $i=\pi_{u,t}\in[\sM]$ to each user $u\in[\sN]$, and obtains a realization of the binary random variable $\mathbf{R}_{ui}$ in response. Thus, if $\mathbf{R}_{ui}=1$, user $u$ likes item $i$, and $\mathbf{R}_{ui}=-1$ means that user $u$ does not like item $i$. Either way, as we explain and motivate in the Introduction, once rated by user $u$, item $i$ \emph{will not} be recommended to that user again.

To learn the preference of some user for an item, we need this user to rate that item. However, since we cannot recommend that item to the same user again, the only way to estimate the preferences of a user is through \emph{collaboration} (e.g., by making inferences from ratings made by other users). In order to make collaborative recommendations based on users' preferences we must assume some structure/relation between users and/or items. To that end, we study a latent model, in which users are clustered. Specifically, following \cite{Dabeer12,Dabeer13,Bresler:2014,Bresler:2016,Heckel:2017,Mina2019} (and many references therein), we assume that each user belongs to one of $\sK<\sN$ \emph{user-types}, where users of the same type have ``similar" item preference vectors. The number of types $\sK$ represents the heterogeneity in the population. This assumption is common and implicitly invoked by contemporary user-based CF algorithms \cite{Sarwar:2000}, which perform well in practice. Several empirical justifications for the clustering behavior in the user-item space can be found in, e.g., \cite{Sutskever09,Bresler:2014,Heckel:2017}.

There are many ways to define similarity between users. 
For example, in \cite{Das:2007,Dabeer12,Dabeer13,Mina2019} two users are considered of the same type if they have the same exact ratings, and these rating vectors are generated at random. While this model is perhaps unrealistic and does not capture challenges in real-world datasets, it allows for a neat theoretical analysis. A slightly more general and flexible model was proposed in \cite{Bresler:2014}. Here, two users $u$ and $v$ belong to the same type if their corresponding preference vectors $\bp_u$ and $\bp_v$ are the same, nonetheless, their actual ratings might be different. In \cite{Heckel:2017} this assumption was significantly relaxed by assuming instead that the preference vectors belonging to the same type are more similar (in terms of the magnitude of their inner product) than those belonging to other types. Roughly speaking, in this paper we follow this latter assumption, but we relax it even further. The precise statement of our assumptions will be given in the following section. This concludes the presentation of the static model. We next incorporate the non-stationary aspect.

\noindent\textbf{User Variations.} As explained in the introduction, our paper initializes the theoretical investigation of the situation where the preferences of users do not remain static but vary over time. To model this, we allow users to change their user-type over time. To wit, denote the type of user $u\in[\sN]$ at time $t\in[\sT]$ by $\calT_{u}(t)\in[\sK]$. In the sequel, $\calT_u(1)$, for any $u\in[\sN]$, designates the initial clustering of users into types. We assume that each user can change his type an \emph{arbitrary} number of times, but bound the total number of such variations. Specifically, we define two notions for variations:
\begin{align}
\mathsf{V_{1}}&\triangleq  \max_{u \in [\sN]}\sum_{t\in[\sT-1]}\mathds{1}[\calT_u(t) \neq \calT_u(t+1)],\label{eq:varbudg}
\end{align}
and
\begin{align}
\mathsf{V_{2}}&\triangleq\sN^{-1}\sum_{t\in[\sT-1]}\sum_{u \in [\sN]}\mathds{1}[\calT_u(t) \neq \calT_u(t+1)],\label{eq:varbudg2}
\end{align}
where $\mathds{1}[\cdot]$ is the indicator function. These definitions capture the constraint imposed on the non-stationary environment faced by the CF algorithm. In words, $\s{V}_1$ is the maximum number of variations over $\sT$ steps, while $\s{N}\s{V}_2$ is the total number of variations. In general, $\s{V_1}$ and $\s{V_2}$ are designed to depend on the time horizon $\sT$. It turns out that in order to obtain the tightest bound on the reward, both definitions are needed. It is clear that $\s{V_1}\leq\s{N}\cdot\s{V}_2$. 

In this paper, we consider the already non-trivial case where the values of (or, at least, upper bounds on) $\s{V}_1$ and $\s{V}_2$ are \emph{given} to the learner/algorithm in advance. This is inline with the various settings of classical results in the non-stationary MAB literature, e.g., \cite{Besbes,Besbes14,Luo2017EfficientCB}. Nonetheless, in a similar fashion to recent advances in the MAB literature \cite{Karnin16,Auer19a,Chen2019ANA}, it is an important, challenging, and of practical importance to propose and analyze algorithms which are oblivious to the number of variations (see Appendix~\ref{sec:concout}).

\noindent\textbf{Learning Problem and Reward.} Generally speaking, a reasonable objective for a CF algorithm is to maximize the expected reward up to time $\sT$, i.e.,
\begin{align*}
\overline{\mathsf{reward}}(\sT)\triangleq \bE\sum_{t\in[\sT]}\frac{1}{\sN}\sum_{u\in[\sN]}\mathbf{R}_{u\pi_{u,t}},
\end{align*}
where we note that the recommended item $\pi_{u,t}$ is a random variable because it is chosen by the CF algorithm as a function of previous responses to recommendations made at previous times. In this paper, however, we focus on recommending \emph{likable} items. Following \cite{Bresler:2014,Bresler:2016,Heckel:2017,Mina2019}, we consider the accumulated reward defined as the expected total number of liked items that are recommended by an algorithm up to time $\sT$, i.e.,
\begin{align}
    \mathsf{reward}(\sT)\triangleq \bE\sum_{t\in[\sT]}\frac{1}{\sN}\sum_{u\in[\sN]}\mathds{1}\pp{p_{u\pi_{u,t}}>1/2}.\label{eqn:rewardDef}
\end{align}
Note that the main difference between these two objectives is that the former prioritize items according to their probability of being liked, while the latter asks to recommend likable items for a user in an arbitrary order. This is sufficient for many real recommendation systems such as for movies and music (see, \cite[Sec. 2]{Bresler:2014} for a detailed discussion). We would like to emphasize that depending on the intended application, other metrics can be considered. For example, one may be interested in the number of actual ``clicks" rather then the number of ``likable" recommendations. Indeed, in some applications, the former is the measurable quantity. Nonetheless, we believe that our algorithms and techniques can handle such a criterion as well.



\section{Algorithms and Theoretical Guarantees}\label{sec:main}
In this section, we present our algorithm \textsc{Collaborative} which recommends items to users over time, followed by a formal theoretical statement for its performance. We start with a non-formal description of our algorithm. The pseudocodes are given
in Algorithms~\ref{algo:em}--\ref{algo:test}. The algorithm takes as input the parameters $(\alpha,\lambda,\s{T_{static}},\Delta_{\sT})$, which we shall discuss below.

We now describe the proposed algorithm. Below, we use ``$t$" to denote the time index at any step of the algorithm. Also, in the sequel, we use $\calP$ to denote an estimated partitioning of the users into clusters, i.e., $\calP$ is a collection of clusters, where each cluster refers to a set of users who have same estimated user type.
\vspace{-0.25cm}
\noindent\paragraph{Batches.} In Algorithm~\ref{algo:em}, the time horizon $\sT$ is partitioned into $\ceil{\sT/\Delta_{\mathsf{T}}}$ batches of size $\Delta_{\mathsf{T}}$ each, and denoted by $\{\calB_{b}\}_{b\geq1}$. We use $\tau$ to denote the time index at which each batch starts, namely, for the $b^{\s{th}}$ batch $\tau=(b-1)\Delta_{\mathsf{T}}$, for $b\in[ 1, \ceil{1/\Delta_{\mathsf{T}}}]$. At the beginning of each batch, we \emph{restart} Algorithm~\textsc{Recommend}, and run it for the entire batch. Also, at the beginning of each batch, we estimate an initial partition $\calP_0$ for the set of all users $[\s{N}]$ using Algorithm~\ref{algo:test}, which we describe bellow. 
At each subsequent time index $t$, the Algorithm~\textsc{Recommend} either runs a \emph{test for variations} with probability (w.p.) $\s{p}_{\sT}\propto\sqrt{\s{V}_1/\sT}$, or, explores-exploits w.p. $1-\s{p}_{\sT}$. 
\vspace{-0.25cm}
\noindent\paragraph{User partitioning in the batch.} The goal of Algorithm~\ref{algo:test} is to create a partition of the users into types. To that end, routine $\textsc{Test}(\s{T_{static}},\lambda,\calS_{t-1},\calP_0)$ recommends $\s{T_{static}}\in\mathbb{N}$ random items $\calT_{\s{test}}$ ($\left|\calT_{\s{test}}\right|=\s{T_{static}}$) to all users in $\calS_{t-1}\subseteq[\sN]$, initialized in each batch to be the set of all users $[\sN]$. Then, using the obtained responses $\ppp{\mathbf{R}_{ui}}_{u\in[\calS],i\in\calT_{\s{test}}}$, in the second and third steps of this algorithm a partition of the users is created. Specifically, for any pair of distinct users $u,v\in\calS_{t-1}$, we let $\s{X}_{u,v}$ be the number of items for which $u$ and $v$ had the same responses. Let $\s{E}_{u,v} = \mathds{1}\pp{\s{X}_{u,v}\geq\lambda\cdot\s{T_{static}}}$, for some $\lambda>0$. Accordingly, users $u$ and $v$ are inferred to have the same type if $\s{E}_{u,v}=1$. Subsequently, if there exists a valid partitioning $\calP$ over the set of users in $\calS_{t-1}$, which is consistent with the variables $\s{E}_{u,v}$, then we declare that $\calP$ is our estimated user-partition, otherwise, we place all users in the same group. This is true precisely when the graph with edge set $\s{E}_{u,v}$ is a disjoint union of cliques. Note that this partitioning procedure is equivalent to the cosine similarity test, declaring that two users $u$ and $v$ as being neighbors if their cosine similarity is at least as large as some threshold. The values of $\s{T_{static}}$ and $\lambda\in(0,1)$ are specified in Theorem~\ref{thm:1}. 
\vspace{-0.3cm}
\noindent\paragraph{Variations detection in the batch.} Given the partitions $\calP_0$ and $\calP$, in step 14 of Algorithm~\ref{algo:em} we compare those partitions in order to detect any variations using routine $\textsc{Variation}$. We show that if the user variations is not ``too large" in the corresponding batch, then it is possible to draw a one-to-one correspondence between the clusters in $\calP$ and $\calP_0$, and therefore, it is possible to identify the users who have changed their clusters, i.e., they are in a different cluster in $\calP$ than the one in $\calP_0$. All such users are declared as ``bad" users and are included in the set $\calV_t$. We update $\calS_t\leftarrow\calS_{t-1}\setminus\calV_t$. The users in $\calV_t$ will be excluded from future exploitation rounds. 

\begin{algorithm}[tb]
\caption{\textsc{Collaborative} Algorithm for recommending items.}\label{algo:em} 
\begin{algorithmic}[1]
\REQUIRE Parameters $(\alpha,\lambda,\s{T_{static}},\Delta_{\mathsf{T}})$.
\STATE Set index batch $b=1$.
\WHILE{$b\leq \ceil{\sT/\Delta_{\mathsf{T}}}$}
\STATE Set $\tau\leftarrow(b-1)\Delta_{\mathsf{T}}$.
\STATE Call \textsc{Recommend}$(\tau,\alpha,\lambda,\s{T_{static}},\Delta_{\mathsf{T}})$.
\STATE Set $b\leftarrow b+1$ and return to the beginning of Step 2.
\ENDWHILE
\end{algorithmic}
\end{algorithm}

\begin{algorithm}[tb]
\caption{\textsc{Recommend}$(\tau,\alpha,\lambda,\s{T_{static}},\Delta_{\mathsf{T}})$\label{algo:user_user_CF_noise}}
\begin{algorithmic}[1]
\STATE Select a random ordering $\sigma$ of the items $[\sM]$. 
\STATE Define $\s{p_R}=\sN^{-\alpha}$ and $\s{p_T}=\sqrt{\s{V_1}/(\sT\cdot\s{T_{static})}}$.
\STATE Let $t$ to be the time index at any step of the algorithm.
\STATE $\calP_0\leftarrow$\textsc{Test}($\s{T_{static}}$,$\lambda$,$[\sN]$).
\STATE Initialize $\calS_{\tau+\s{T_{static}}}\leftarrow[\sN]$.
\WHILE{$\tau+\s{T_{static}}<t\leq\min(\sT,\tau+\Delta_{\mathsf{T}})$}
\STATE Draw $\Theta\sim\mathsf{Bern}(\s{p_T})$.
\IF{$\Theta=0$}
\STATE $\calS_{t}\leftarrow\calS_{t-1}$.
\STATE$\sbullet[.75]$\textbf{ With probability} $\s{p_R}$: $\pi_{u,t}\leftarrow$ random item for all $u\in[\sN]$ that has not rated.
\STATE$\sbullet[.75]$\textbf{ With probability} $1-\s{p_R}$: $\pi_{u,t}\leftarrow$ item $i$ that user $u\in\calS_t$ has not rated and that maximizes score $\hat{p}_{ui}^{(t)}$ in \eqref{eqn:score}.
\ELSE
\STATE $\calP\leftarrow$\textsc{Test}($\s{T_{static}}$,$\lambda$,$\calS_{t-1}$).
\STATE $\calV_t \leftarrow$\textsc{Variation}$(\calP,\calP_0)$.
\STATE $\calS_t\leftarrow\calS_{t-1}\setminus\calV_t$.
\ENDIF 
\ENDWHILE
\end{algorithmic}
\end{algorithm}

\begin{algorithm}[tb]
\caption{\textsc{Test}($\s{T_{static}}$,$\lambda$,$\calS_{t-1}$) Algorithm for partitioning users.\label{algo:test}}
\begin{algorithmic}[1]
\STATE Recommend $\s{T_{static}}$ random items $\calT_{\s{test}}$ to all users in $\calS_{t-1}$.
\STATE For any $u\neq v\in\calS_{t-1}$, let $\s{X}_{u,v}$ be the number of items in $\calT_{\s{test}}$ for which $u$ and $v$ had the same responses, and let $\s{E}_{u,v} = \mathds{1}\pp{\s{X}_{u,v}\geq\lambda\cdot\s{T_{static}}}$.
\STATE Let $\calP$ be the valid partitioning over users consistent with the variables $\s{E}_{u,v}$. If such a partitioning does not exist, let $\calP \equiv \calS_{t-1}$.
\STATE Return $\calP$.
\end{algorithmic}
\end{algorithm}

\begin{algorithm}[tb]
\caption{\textsc{Variation}$(\calP,\calP_0)$ Algorithm for testing variations.\label{algo:var}}
\begin{algorithmic}[1]
\STATE For each cluster $\calC$ in $\calP$, find a cluster $\calC'$ in $\calP_0$ that shares at least half the users in $\calC$ i.e., $\left|\calC \cap \calC' \right| \ge \frac{|\calC|}{2}$. 
\STATE Establish a one-to-one mapping from clusters in $\calP$ to clusters in $\calP_0$ in this manner. If such a one-to-one mapping is not possible, return $\emptyset$.
\STATE Identify the set of users $\calV$ who belong to different clusters in $\calP$ and $\calP_0$.
\STATE Return $\calV$.
\end{algorithmic}
\end{algorithm}

\vspace{-0.3cm}
\noindent\paragraph{Exploration-Exploitation.} Since we restart the main algorithm in each batch, we focus on a particular batch $b$ in the explanation below. For ease of notation, we omit the batch index from all definitions. As mentioned above, w.p. $1-\s{p_T}$ our algorithm performs an exploration-exploitation routine. In such a case, with probability $\s{p_R}=\sN^{-\alpha}$ the algorithm randomly explores the space of items, and with complementary probability, $1-\s{p_R}$, the algorithm exploits by recommending those items that maximize a certain score. With some abuse of notation let $\mathbf{R}_{ui}^{(t)}\in\ppp{-1,0,1}$ be the observed rating of user $u$ to item $i$ up to time $t$ in the $b^{\s{th}}$ batch, where ``$0$" means that no rating has been given yet (in the $b^{\s{th}}$ batch). When exploiting, the algorithm evaluates empirical probabilities $\hat{p}_{ui}^{(t)}$, at time $t$, for user $u\in\calS_t$, and item $i$. These empirical probabilities are defined as follows,
\begin{align}
\hat{p}_{ui}^{(t)}\triangleq
\begin{cases}
\frac{\sum_{v\in\mathsf{neigh}(u,t)}\mathds{1}\pp{\mathbf{R}_{vi}^{(t)}=1}}{\mathsf{C}_{ut}},\ &\text{if }\mathsf{C}_{ut}>0\\
1/2,\ &\text{otherwise},
\end{cases}
\label{eqn:score}
\end{align}
where $\mathsf{C}_{ut}\triangleq\sum_{v\in\mathsf{neigh}(u,t)}\mathds{1}[\mathbf{R}_{vi}^{(t)}\neq0]$, and the ``neighborhood" of user $u\in\calS_t$ at time $t$ in the $b^{\s{th}}$ batch is $\mathsf{neigh}(u,t)\triangleq\calP_{0}(u)\cap\calS_t$, 
where $\calP_0(u)$ is the set of users in the same cluster as user $u$ in the initial partition $\calP_0$ created in the beginning of the batch. Note that the exploitation step at any time index $t$ is done only for those users which are present in $\calS_t$. Finally, we emphasize here that the empirical probabilities $\hat{p}_{ui}^{(t)}$ as well as the neighborhoods $\mathsf{neigh}(u,t)$, are all refreshed at the beginning of each batch; ratings from previous batches are ignored in the evaluation of these quantities.
\begin{rmk}
In practice, we can continue recommending items to any user $u$ in $\ca{V}_t$ (bad users) based on the items liked by other users who belong to $\ca{P}_0(u)$.
\end{rmk}

\noindent\textbf{Theoretical performance guarantee.} In the following, we state our main theoretical result, which is a lower bound on the reward in \eqref{eqn:rewardDef} achieved by Algorithm~\textsc{Collaborative}. To that end, we introduce three natural and prima facie necessary assumptions, which will be used in order to establish our result.
\begin{itemize}
\itemsep 0.2em
    \item[\textbf{A1}]\textbf{No $\Delta$-ambiguous items.} For every user $u \in [\sN]$ and item $i \in [\sM]$, there exists a constant $\Delta\in(0,1/2]$ such that $\left|p_{ui}-1/2\right| \ge \Delta$. 
    \item[\textbf{A2}]\textbf{Minimum portion of likeable items.} There exists a constant $\mu\in[0,1]$, such that for every user $u \in [\sN]$, it holds $\sum_{i=1}^{\sM}\mathds{1}\pp{p_{ui}>1/2}\geq \mu\sM$. 
    
    \item[\textbf{A3}]\textbf{(In)coherence.} There exist constants $\gamma_2\geq\gamma_1\in[0,1)$ such that if two users $u$ and $v$ are of \emph{different} types, then
    $\langle 2\bp_u-\mathbf{1},2\bp_v-\mathbf{1} \rangle \le 4\gamma_1 \Delta^2\sM$, and if they are of the \emph{same} type, then $\langle 2\bp_u-\mathbf{1},2\bp_v-\mathbf{1} \rangle \ge 4\gamma_2 \Delta^2\sM$. Here $\mathbf{1}$ is the all ones vector.
\end{itemize}
In a nutshell, Assumption~\textbf{A1} is necessary to assure that one can infer whether an item is either likable or unlikable with a finite number of samples. The parameter $\Delta$ quantifies the inconsistency (or, noise), where $\Delta=0$ ($\Delta=1/2$) is the completely noisy (noiseless) case. The second condition states that each user likes at least a fraction $\mu$ of the total items. This assumption is made to avoid degenerate situations were a user $u$ does not like any item. Note that one can always ignore those users whose activity is insignificant since their contribution to the reward will be insignificant as well. Evidently, from a practical point of view, any real-world recommendation engine must prioritize users whose activity is significant. Notice that relevant literature, such as \cite{Bresler:2014,Heckel:2017}, makes the same assumptions as well. 

The more interesting assumption is \textbf{A3}. The incoherence part of assumption \textbf{A3} requires that the preference vectors for any two users $u$ and $v$ belonging to different user-types are not too similar, so that the cosine similarity test can separate users of different types over time. The parameter $\gamma_1$ controls this incoherence; the smaller $\gamma_1$ is, the less similar are users of different types. This incoherence assumption appears in \cite{Bresler:2014} as well. The coherence part of assumption \textbf{A3} asks that any two users of the same user-type share a large fraction of the items that they find likable, and this fraction is controlled by the parameter $\gamma_2$. This coherence assumption should be contrasted with \cite{Bresler:2014} where it was assumed that the preference vectors $\bp_u$ and $\bp_v$ of two users $u$ and $v$ from the same user-type to be exactly the same, which is evidently a stronger assumption, and accordingly, our coherence assumption relaxes that significantly. 

\emph{We would like to clarify that the above assumptions are only required for the analysis; Our proposed algorithm can be implemented regardless of these assumptions}. As is shown in Section~\ref{sec:exper}, in real-world applications, our algorithm works well even if these assumptions do not hold. 
We next provide two examples where the typical values of the various parameters in assumptions \textbf{A1}--\textbf{A3} are derived.

\begin{example}
Consider the noiseless case where $\Delta=1/2$. In this case, the users' ratings are deterministic given their user-types. Accordingly we generate $\sK$ $\s{d}$-dimensional binary vectors $\{\mathbf{b}_i\}_{i=1}^{\sK}$ by randomly drawing $\s{d}$ statistically independent $\mathsf{Bernoulli}(1/2)$ random variables, for each user-type. Here $\s{d}\leq\sM$ is some parameter. Then, the preference vector of any user in the $\ell$ user-type (i.e., $\calT_{\ell}$) will be the concatenation of $\mathbf{b}_{\ell}$ with $\sM-\s{d}$ statistically independent $\mathsf{Bernoulli}(1/2)$ random variables. To wit, the preference vector of user $u\in\calT_{\ell}$ is $\bp_u = [\mathbf{b}_{\ell};\mathbf{e}_{u}]$, where $\mathbf{e}_{u}$ is a binary vector whose $\sM-\s{d}$ elements are statistically independent $\mathsf{Bernoulli}(1/2)$ random variables. Now, for any two users $u$ and $v$ from different user-types, it should be clear that the inner product $\frac{1}{\sM}\langle 2\bp_u-\mathbf{1},2\bp_v-\mathbf{1} \rangle$ is merely a sum of $\sM$ Rademacher random variables normalized by $\sM$. Accordingly, a standard concentration inequality on sum of Rademacher random variables tell us that the value of this inner product is in the interval $[-\Theta(\sqrt{\frac{\log\sM}{\sM}}),\Theta(\sqrt{\frac{\log\sM}{\sM}})]$, with probability at least $1-\mathsf{poly}(\sM^{-1})$. Therefore, for the incoherence condition to hold with high probability we need $\gamma_1>\Theta(\sqrt{\frac{\log\sM}{\sM}})$. On the other hand, if $u$ and $v$ are from the same user-type, the inner product of the first $\s{d}$ items is maximal (i.e., unity) by construction. Therefore, using the same arguments it can be shown that the value of the above inner product is at least $\frac{\s{d}}{\sM}-\Theta(\sqrt{\frac{(\sM-\s{d})\log(\sM-\s{d})}{\sM^2}})\geq \frac{\s{d}}{\sM}-\Theta(\sqrt{\frac{\log\sM}{\sM}})$, with high probability. This implies that the coherence condition holds if $\gamma_2\leq\frac{\s{d}}{\sM}-\Theta(\sqrt{\frac{(\sM-\s{d})\log\sM}{\sM^2}})$. When $\s{d}=\sM$, which means that users of the same user-type have exactly the same preference vectors and therefore $\gamma_2$ can get as large as $1$. Otherwise, there is a certain payment depending on how similar the preference vectors are, controlled by $\s{d}$. Finally, the typical value of $\mu$ is clearly around $1/2$ with high probability.  
\end{example}

\begin{example}
We generalize the previous example. Consider the case where each entry of the $\s{d}$-dimensional vectors $\{\mathbf{b}_{\ell}\}_{\ell=1}^{\sK}$ is $\frac{1}{2}+\Delta$ with probability $\mu$ and $\frac{1}{2}-\Delta$ with probability $1-\mu$, for a fixed $\Delta$. Then, as in the previous example, the preference vector of user $u\in\calT_{\ell}$ is $\bp_u = [\mathbf{b}_{\ell};\mathbf{e}_{u}]$, where $\mathbf{e}_{u}$ is now a random vector whose $\sM-\s{d}$ elements are statistically independent, and each element is either $\frac{1}{2}+\Delta$ with probability $\mu$ and $\frac{1}{2}-\Delta$ with probability $1-\mu$. Then, using the same arguments as in the previous example, it can be shown that if users $u$ and $v$ are of different user types, then the incoherence condition holds with high probability when $\gamma_1>(1-2\mu)^2+\Theta(\sqrt{\frac{\log\sM}{\sM}})$. On the other hand, if users $u$ and $v$ are of the same user type, then the coherence condition holds with high probability when $\gamma_2\leq\frac{\s{d}}{\sM}-(1-2\mu)^2-\Theta(\sqrt{\frac{(\sM-\s{d})\log(\sM-\s{d})}{\sM^2}})$.
\end{example}

Let $a\vee b\triangleq\max(a,b)$ and $a\wedge b\triangleq\min(a,b)$, for $a,b\in\mathbb{R}$. We are now in position to state our main result.
\begin{theorem}\label{thm:1}
Let $\delta\in(0,1)$ and $\nu\in(0,1)$ be some pre-specified tolerances. Take as input to \textsc{Collaborative} $\alpha\in(0,4/7]$, any $\lambda\in(\lambda_{-},\lambda_{+})$, and $\Delta_{\mathsf{T}} = \sT\wedge\sqrt{\frac{2\nu\sT}{\mathsf{3V_{2}}}\kappa}$, where $\lambda_{\pm}\triangleq2\gamma_1\Delta^2+\frac{1}{2}\pm\Delta^2(\gamma_2-\gamma_1)$ and $\kappa\triangleq\s{T_{static}}(1-\delta-\mu)$. Define
$\s{T_{static}}\triangleq\frac{2\log(3\sN^2/\delta)}{\Delta^4(\gamma_2-\gamma_1)^2}$ and $\mathsf{T}_{\mathsf{learn}}\triangleq \pp{ 1\vee\frac{3\mathsf{V_{2}}}{2\nu(1-\delta-\mu)}}\mathsf{T}_{\mathsf{static}}.$ Consider the latent
source model and assumptions \textbf{A1}--\textbf{A3}. If at every time-point, the portion of users belonging to any user-type is at least $\nu$, then, for any $\sT_{\s{learn}}\leq\sT\leq\mu\cdot\sM$, the expected proportion of liked items recommended by
\textsc{Collaborative} up until time $\sT$ satisfies
\begin{align}
\mathsf{reward}(\sT)&\geq (1-\delta)\cdot \sT-\kappa\vee\sqrt{\frac{3\s{V_{2}\sT}\kappa}{2\nu}}-2\s{V_{1}}\s{T_{static}}-2\sqrt{\s{V}_1\sT\s{T_{static}}}-\frac{3\s{V_{2}}\sT}{2\nu}\wedge\sqrt{\frac{3\s{V}_{2}\sT\kappa}{2\nu}}.\label{eqn:rewardTh}
\end{align}
For $\sT<\sT_{\s{learn}}$, the reward satisfies $\mathsf{reward}(\sT)\geq\mu\cdot\sT$. 
\end{theorem}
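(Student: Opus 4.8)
Because \textsc{Collaborative} restarts \textsc{Recommend} at the beginning of each of the $n_B\triangleq\ceil{\sT/\Delta_{\mathsf{T}}}$ batches and the reward in \eqref{eqn:rewardDef} is additive over time, I would bound the reward batch-by-batch against the oracle that hands a likable item to every user at every step; since $\sT\le\mu\sM$, every user owns $\ge\mu\sM\ge\sT$ likable items, so this oracle is feasible for all $\sT$ steps and its reward equals $\sT$. Against this $\sT$ the proof then accounts for four disjoint losses: (i) the initial \textsc{Test} of each batch; (ii) the variation tests; (iii) users who have changed type but are not yet detected, together with the contamination of other users' scores they cause; and (iv) the rare event on which some \textsc{Test}/\textsc{Variation} call errs, which is folded into one ``clean'' event and charged as a multiplicative $(1-\delta)$ at the end.

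\textbf{Correctness and cost of \textsc{Test}.}
Fix a batch and condition on its start. For $u\ne v$ and a uniformly random test item $i$ one has $\pr[\mathbf{R}_{ui}=\mathbf{R}_{vi}\mid i]=\tfrac12+\tfrac12(2p_{ui}-1)(2p_{vi}-1)$, hence $\bE\,\s{X}_{u,v}=\tfrac12\s{T_{static}}\big(1+\tfrac{1}{\sM}\langle 2\bp_u-\mathbf{1},2\bp_v-\mathbf{1}\rangle\big)$, which by \textbf{A3} is at most $(\tfrac12+2\gamma_1\Delta^2)\s{T_{static}}$ for different-type pairs and at least $(\tfrac12+2\gamma_2\Delta^2)\s{T_{static}}$ for same-type pairs. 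The threshold $\lambda\s{T_{static}}$, with $\lambda$ in the prescribed window $(\lambda_-,\lambda_+)$, separates these two means with a gap of order $\Delta^2(\gamma_2-\gamma_1)$, so a Hoeffding bound over the randomness of the test set and the ratings, with $\s{T_{static}}=2\log(3\sN^2/\delta)/(\Delta^4(\gamma_2-\gamma_1)^2)$, makes each $\s{E}_{u,v}$ correct with probability $1-O(\delta/\sN^2)$; a union bound over all pairs and all (polynomially many in $\sN$) \textsc{Test} invocations produces the clean event of probability $\ge 1-\delta$, on which every \textsc{Test} returns the true partition of its input (the agreement graph being a disjoint union of cliques) and $\calP_0$ is the true clustering at the batch start. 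A \textsc{Test} consumes $\s{T_{static}}$ recommendations of fresh items; by \textbf{A2} at least a $\mu$-fraction of any user's unrated items are likable, so it still earns at rate $\ge\mu$, and its net loss versus the oracle — after the $(1-\delta)$ discount — is at most $\kappa=\s{T_{static}}(1-\delta-\mu)$.

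\textbf{Cold start and exploitation.}
On the clean event, for a user $u$ that has not changed type the neighborhood $\mathsf{neigh}(u,t)=\calP_0(u)\cap\calS_t$ consists of users of $u$'s type. Each exploration round (probability $\s{p_R}=\sN^{-\alpha}$) feeds one fresh random rating into every neighborhood, and a coupon-collector/Chernoff argument shows that after a cold-start portion of the batch every item is rated by enough cluster members that $\hat{p}_{ui}^{(t)}$ lies within $\Delta/2$ of the average of $p_{vi}$ over $v\in\mathsf{neigh}(u,t)$, which by \textbf{A1} exceeds $\tfrac12+\Delta$ on likable items and is below $\tfrac12-\Delta$ on unlikable ones; the restriction $\alpha\le4/7$ keeps this exploration budget affordable, and the hypothesis $\sT\ge\sT_{\s{learn}}$ is exactly what forces $\Delta_{\mathsf{T}}\ge\s{T_{static}}$, so a batch has room for this phase at all. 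Since unexplored items score exactly $\tfrac12$, the maximizer in \eqref{eqn:score} is always a likable item, and it stays so because $u$ has $\ge\mu\sM\ge\sT$ of them; hence in a batch free of contamination the exploitation rounds earn at rate $1$. The only way this fails is if users of $u$'s type who have changed type but are not yet in $\calV_t$ skew the neighborhood average.

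\textbf{Variations, the two budgets, and assembly.}
Call a cluster ``stable'' in a batch if only a small constant fraction (times $\Delta$) of its members change type during the batch. Every cluster has size $\ge\nu\sN$, so the global change budget $\sN\s{V}_2$ makes at most $O(\s{V}_2/(\Delta\nu))$ cluster-batch pairs unstable; in a stable cluster each undetected defector moves $\hat{p}_{ui}^{(t)}$ by at most $1/|\calC|\le1/(\nu\sN)$, preserving the $\Delta/2$ margin, while a detected defector is placed in $\calV_t$, after which it corrupts no neighborhood and only receives exploration recommendations (rate $\ge\mu$). Detection occurs at the first variation test to fire (probability $\s{p_T}=\sqrt{\s{V}_1/(\sT\s{T_{static}})}$ per step); on the clean event the re-run of \textsc{Test} on $\calS_{t-1}$ is again correct, so \textsc{Variation} can build the one-to-one cluster map by majority overlap — unambiguous precisely because clusters are large and the batch is stable — and flags exactly the relocated users. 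The worst-case budget $\s{V}_1$ then governs per-user quantities: over a user's $\le\s{V}_1$ changes the detection lags sum to $\le\s{V}_1/\s{p_T}=\sqrt{\s{V}_1\sT\s{T_{static}}}$ and the test catching each change costs $\le\s{T_{static}}$ (giving $\le\s{V}_1\s{T_{static}}$), while the expected total length of all variation tests is $\s{p_T}\sT\s{T_{static}}=\sqrt{\s{V}_1\sT\s{T_{static}}}$; a concentration correction turns these into the $2\s{V}_1\s{T_{static}}$ and $2\sqrt{\s{V}_1\sT\s{T_{static}}}$ terms. The average budget $\s{V}_2$ together with the size floor $\nu$ bounds the mass of users sitting in $\calV_t$ and the number of unstable cluster-batch pairs, producing $\tfrac{3\s{V}_2\sT}{2\nu}\wedge\sqrt{\tfrac{3\s{V}_2\sT\kappa}{2\nu}}$; and summing the per-batch \textsc{Test} loss $\kappa$ over the $n_B$ batches gives $\kappa\vee\sqrt{\tfrac{3\s{V}_2\sT\kappa}{2\nu}}$, where the choice $\Delta_{\mathsf{T}}=\sT\wedge\sqrt{2\nu\sT\kappa/(3\s{V}_2)}$ is precisely the one that balances the overhead $n_B\kappa$ of many short batches against the $\propto\Delta_{\mathsf{T}}$ contamination a single long batch can accumulate. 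Subtracting the four losses from $\sT$ and applying $(1-\delta)$ yields \eqref{eqn:rewardTh}; for $\sT<\sT_{\s{learn}}$ one simply discards exploitation and uses that every (near-)random recommendation is likable with probability $\ge\mu$, giving $\mathsf{reward}(\sT)\ge\mu\sT$. The main obstacle is exactly this last step: controlling the cross-user contamination of $\hat{p}_{ui}^{(t)}$ by not-yet-detected type changers and certifying that \textsc{Variation}'s cluster matching stays valid — which is why all three of the size floor $\nu$, the worst-case budget $\s{V}_1$ (bounding detection lag, hence contamination duration), and the average budget $\s{V}_2$ (bounding how many clusters can be badly hit at once) are simultaneously needed, and why their interplay forces the batch-length tradeoff behind the non-standard terms in \eqref{eqn:rewardTh}.
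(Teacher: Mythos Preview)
Your high-level plan matches the paper's: decompose the regret batch-by-batch, establish \textsc{Test} correctness via a Hoeffding bound on $\s X_{u,v}$ (the paper's Lemma~\ref{lem:test}), split the per-batch regret into (i) the initial test of length $\s T_{\s{static}}$, (ii) the cost of variation tests, (iii) rounds with undetected bad users, and (iv) the ``static'' regret on the clean event, then optimize $p_{\s T}$ and $\Delta_{\s T}$. Your identification of $p_{\s T}^\star=\sqrt{\s V_1/(\s T\,\s T_{\s{static}})}$ and $\Delta_{\s T}^\star$ as the balancers of the respective tradeoffs is exactly right, as is the $\s T<\s T_{\s{learn}}$ case.

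There is one genuine gap. Your treatment of cross-user contamination claims that in a ``stable'' cluster an undetected defector perturbs $\hat p_{ui}^{(t)}$ by at most $1/|\calC|\le 1/(\nu\s N)$, preserving a $\Delta/2$ margin. This is incorrect: the score in \eqref{eqn:score} averages only over the neighbors who have \emph{rated} item $i$, and that denominator $\s C_{ut}$ is of order $|\calC|\cdot t/(\s M\,\s N^{\alpha})$, far smaller than $|\calC|$; a single wrong-type rater can therefore swing $\hat p_{ui}^{(t)}$ by much more than $1/(\nu\s N)$, and the margin argument collapses. The paper does not attempt this finer analysis at all. Instead it defines $\s H_t=\mathds 1[\text{some undetected bad user is present in }\calS_t]$ and bounds the term you call (iii) crudely by $\sum_t \mathds 1[\s H_t=1]$: whenever \emph{any} bad user is undetected, the entire round is charged as regret for \emph{all} users. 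Since each change is caught at the first subsequent variation test (a geometric wait with mean $1/p_{\s T}$), this gives $\s A_4\le \s V_{\calB_\ell,1}/p_{\s T}$ per batch and $\s V_1/p_{\s T}$ overall, with no contamination-margin bookkeeping needed.

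A related misattribution: the term $\tfrac{3\s V_2\s T}{2\nu}$ (and its $\wedge$-partner) does not come from ``mass of users sitting in $\calV_t$'' or from a $\Delta$-dependent stability notion. In the paper it arises from a coarse case split on the batch: either at least $2/3$ of some user-type changes during the batch (event $\calE_1$), or the majority-overlap matching in \textsc{Variation} fails (event $\calE_2$); on $\calA=\calE_1\cup\calE_2$ one has $\s V_{\calB_\ell,2}\ge 2\nu/3$, so the trivial batch regret $\Delta_{\s T}$ is at most $\tfrac{3\Delta_{\s T}\s V_{\calB_\ell,2}}{2\nu}$. The $1/3$--$2/3$ threshold is there solely to make \textsc{Variation}'s one-to-one cluster mapping well-defined, not to control score perturbations. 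Once you replace your margin argument by the paper's cruder $\s H_t$-charging and reinterpret the $\tfrac{3\s V_2\s T}{2\nu}$ term as the cost of the event $\calA$, your assembly of the five terms and the final optimization go through essentially as you wrote.
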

The proof of Theorem~\ref{thm:1} can be found in Appendix~\ref{app:A}, and we now discuss its implications. For $\sT<\sT_{\s{learn}}$, the algorithm may give poor recommendations. This is reasonable since in the first $\sT_{\s{learn}}$ rounds mostly random items are recommended, independently of the users responses, and thus the reward is at least $\mu\cdot\sT$. This is the initial phase for which our CF algorithm gives poor recommendations. Then, for $\sT_{\s{learn}}<\sT<\mu\cdot\sM$, the algorithm becomes efficient. Specifically, when $\mathsf{V_{1}}=\s{V_2}=0$, we get that $\mathsf{reward}(\sT)/\sT \geq (1-\s{T_{learn}}/\sT)\cdot(1-\delta-\eta)$. Therefore, the proposed algorithm becomes near-optimal, as the achieved reward is $(1-\varepsilon')$--close to an oracle that recommends only likeable items and thus achieves a reward of $\sT$. Note that contrary to multi-armed bandit literature, linear reward is common in collaborative filtering frameworks (see, for example, \cite{Bresler:2014,Heckel:2017,Mina2019}). For $\sT>\mu\cdot\sM$, on the other hand, one cannot guarantee that likable items remain, and the learning time (cold-start time) is $\sT_{\s{learn}}=\sT_{\s{static}}$. 

Clearly, when $\s{V_1},\s{V_2}>0$ the reward decreases. Specifically, if both of these parameters scale as $O(\sT^c)$, for some constant $c\in[0,1]$, then the payoff for non-stationarity compared to the static case is of order $O(\sqrt{\sT^{1+c}})$, a sub-linear cost in $\sT$. In particular, ignoring the exact dependency of the reward on the various parameters, the scaling of \eqref{eqn:rewardTh} with $\sT$ is $\mathsf{reward}(\sT)/\sT\geq 1-\delta-O(\sqrt{\sT^{c-1}})$. This result provides a spectrum of orders of the payoff ranging between order $O(\sqrt{\sT})$ (constant number of variations), and of order $O(\sT)$ (number of variations is $O(\sT)$). The sub-linearity growth implies that our user-based CF algorithm has long run average performance that converges to the performance that would have been achieved in the static environment, where users' preferences do not vary. Finally, in terms of the learning time, it can be checked that $\sT_{\s{learn}} = \Theta(\sT^c\log(\sN^2/\delta))$, and thus the condition $\sT>\sT_{\s{learn}}$ boils down to $\sT>\Theta([\log(\sN^2/\delta)]^{\frac{1}{1-c}})$. Therefore, when there are variations, the cold-start time grows, and the scaling of the variations with $\sT$ dictates the poly-log order of this learning phase.

Next, we study the dependency of the learning time in Theorem~\ref{thm:1} on $\gamma_1$ and $\gamma_2$, for Example~1 above. It can be seen that the learning time depend on these parameters via the term $(\gamma_2-\gamma_1)^{-2}$. Accordingly, when $\s{d}=\sM$ in Example~1 we get that $(\gamma_2-\gamma_1)^{-2}$ is of order constant by taking $\gamma_1 = \Theta(\sqrt{\frac{\log\sM}{\sM}})$ and $\gamma_2=1$. In fact, it is evident that this is true when $\s{d}=\Theta(\sM)$ as well, and thus if any two users from the same user-type share a constant fraction of the total number of items that they find likeable, then this has a multiplicative constant effect on the learning time. If however, $\s{d}=o(\sM)$, say, $\s{d} = O(\sM^{q})$, for $q\in(0,1/2)$, then we can set $\gamma_2 = \Theta(\sM^{-c})$, and accordingly, $\sT_{\s{static}}$ will scale as $\sM^{2q}\cdot\log(\sN^2/\delta)$, for $\alpha\to0$. Accordingly, we see that when negligible amount of common items are likeable by users of the same type, then the learning time is significantly larger, as expected. 

Below we mention briefly some of the technical challenges encountered in the proof of Theorem~\ref{thm:1}. First, we establish a connection between the static reward and the reward of a dynamic oracle in the non-stationary setting. This connection is general and can be used in other possible static recommendation system models that incorporate non-stationary environment. One of the main difficulties in the proof of Theorem~\ref{thm:1} is the analysis of how would a variation in the preference of some user affects other users in its estimated neighborhood. Unless this user is detected by Algorithm~\ref{algo:test}, the algorithm does not know the change of this user, and it will keep using this user's feedback to make recommendations for other users. This is one source of added regret incurred by the unsuccessful and incorrect detections of the change-points. Other sources of regret are the cost associated with the detection/testing delay, and a regret incurred by variations happening when testing. These costs are captured by the third and fourth terms at the R.H.S. of \eqref{eqn:rewardTh}. 

\section{Experiments}\label{sec:exper}

We simulate an online recommender system using real-world data in order to understand whether our algorithm performs well, even when the data is not generated by the probabilistic model introduced in Section~\ref{sec:probsetting}. To that end, we follow a similar vein as in \cite{Bresler:2014,Heckel:2017}, and look at movie ratings from the popular Movielens25m dataset,\footnote{https://grouplens.org/datasets/movielens/25m/} which provides 5-star rating and free-text tagging activity from Movielens, a movie recommendation service. We parsed the first $7$ million ratings for our experiment, and consider only those users who have rated at least $225$ movies, ending up with a total number of $\sN=247$ users. 

To avoid any kind of biases, we also restrict ourselves to movies which are more or less equally liked and disliked by the users. To that end, we choose those movies whose average ratings is between $2.5$ and $3.5$, and we found out that $\sM=10149$ such movies exist. Finally, we looked at two genres: \texttt{Action} and \texttt{Romance}. For each user $u \in [\sN]$, we recover piece-wise stationary preferences by the following steps:
\begin{enumerate}
\item We sort the movies rated by user $u$ in ascending order according to the time-stamp.
\item We partition the movies rated by user $u$ into $15$ bins so that each bin contains equal number of movies. We will consider each bin to be a window of time.
\item For each bin, we find the number $\s{a}_u\in\mathbb{N}$ of \texttt{Action} movies rated by user $u$, as well as $\s{r}_u\in\mathbb{N}$ the number of \texttt{Romance} movies rated by the same user.
\end{enumerate}
Accordingly, note that in each bin, the probability of user $u$: liking a movie tagged \texttt{Action} but not \texttt{Romance} is $\mathsf{a}_u/(\mathsf{a}_u+\mathsf{r}_u)$; liking a movie tagged \texttt{Romance} but not \texttt{Action} is $\mathsf{r}_u/(\mathsf{a}_u+\mathsf{r}_u)$; liking a movie tagged both \texttt{Action} and \texttt{Romance} is \texttt{1}, and finally, a movie which does not have any of these tags is \texttt{0}. We want to point out that we consider the number of \texttt{Action} and \texttt{Romance} movies that were \textit{rated} by the user, rather than just \textit{liked}, since any user is biased towards rating the movies he will like (see, \cite{Heckel:2017}), and therefore the number of movies rated by the user is a better indicator of his preference towards the genre. Fig.~\ref{fig:PreferencesVariations0app} shows the probability of $5$ randomly chosen users liking \texttt{Action} movies across $10$ different bins. It is clear that the preferences exhibit a piece-wise stationary behaviour, and that the variations are significant.

We now assume for simplicity that the number of rounds in each bin is $100$ (this value is unknown to the algorithm), and we took the total number of rounds to be $\sT=600$. In lieu of creating the initial disjoint clusters at the beginning of each batch (i.e., $\ca{P}_0$), we recommend $\s{T}_{\s{static}}$ randomly chosen items to all users. For each user $u \in [\sN]$, we take the neighbors of $u$ to be the top $10$ users whose feedback vector has the highest cosine similarity with that of user $u$, over the $\s{T}_{\s{static}}$ recommended items. Further, since $\sT=600$ is quite small, we do not test for bad users in each batch (namely, we skip lines $13-15$ in Algorithm~\ref{algo:user_user_CF_noise}). The reasons for this modification are as follows. First, in the theoretical analysis, we have assumed that ratings of a single \textit{bad user} can potentially result in faulty recommendations for all other users in their user group. However, in practice, that might not be the case as future recommendations are determined by multiple other users who can negate the effect of that \textit{bad} user. Secondly, as the dataset for our experiment is not very large ($10$ neighbors for each user), detecting bad users based on ratings of neighbors can be unreliable. Finally, for small $\s{T}$, $\s{T}_{\s{static}}$ is comparatively large and therefore testing for \textit{bad users} can potentially bias the accumulated reward towards larger batch-sizes. Nevertheless, as we will show, our experiment clearly demonstrates the dependence on $\Delta_{\sT}$ and $\s{T}_{\s{static}}$ in the non-stationary setting. We run Algorithm~\ref{algo:em} with $\s{T}_{\s{static}}=10$ and $p_{\s{R}}=0.1$, for several different values of the batch-size $\Delta_{\s{T}}$, each for $5$ different iterations. The performance of the algorithms is measured in terms of the average cumulative reward up to time $\sT$, namely, 
$$
\mathsf{acc}\text{-}\mathsf{reward}(\sT)\triangleq\sum_{t\in[\sT]}\frac{1}{\sN}\sum_{u\in[\sN]}\mathbf{R}_{u\pi_{u,t}},
$$ 
where $\pi_{u,t}$ is the item recommended by the algorithm to user $u$ at time $t$.
The average cumulative reward up to time $\sT$ is given in Table~\ref{Table:10}. From this table, it is clear that the highest average cumulative reward is obtained when the batch-size is $\Delta_{\s{T}}=100$, and decreases gradually as the batch-size increases. Finally, not that since we are not detecting \textit{bad users} in our experiments, the knowledge of $\s{V}_1$ is not required ($\s{V}_1$ is only used to set $p_{\s{T}}$). Notice that $\s{V}_2$ is used to set the batch-size $\Delta_{\s{T}}$ correctly. Since an incorrect value of $\s{V}_2$ results in a sub-optimal value for $\Delta_{\s{T}}$, computing the average cumulative reward by iterating through different values of $\Delta_{\s{T}}$ also gives an idea about the sensitivity of our algorithms with respect to this mis-specification. As can be seen from our results, the highest value of $\mathsf{acc}\text{-}\mathsf{reward}(\sT)$ was achieved when $\Delta_{\s{T}}=100$, while the $\mathsf{acc}\text{-}\mathsf{reward}(\sT)$ degrades gracefully with the mis-specification of $\Delta_{\s{T}}$ (or, $\s{V}_2$).

\begin{table}[h!]
\centering
\begin{tabular}{||c | c||} 
\hline
 $\Delta_{\s{T}}$ & $\mathsf{acc}\text{-}\mathsf{reward}(\sT)$ \\ [0.5ex] 
 \hline\hline
 50 & 316.707\\
  \hline
 100 & 325.716\\
  \hline
 150 & 306.538\\
  \hline
 200 & 278.219\\
  \hline
 300 & 278.642\\
  \hline
 350 & 224.893\\
  \hline
 400 & 239.410\\
  \hline
 450 & 204.127\\
  \hline
 500 & 162.96\\
  \hline
 550 & 169.97\\
  \hline
 600 & 137.40 \\
 \hline
\end{tabular}
\caption{Accumulated reward as a function of the batch-size: $\Delta_{\s{T}}=600$ corresponds to the static case, and $\Delta_{\s{T}}=100$ corresponds to the optimal value.}
\label{Table:10}
\end{table}

Next, we illustrate the benefit of our algorithm compared to the static algorithm even in a stationary environment. To that end, we run Algorithm~\ref{algo:em} with $\Delta_{\s{T}}\in\{100,600\}$, $\sT_{\s{static}}\in\{10,30,60,80,100\}$, and assume a single bin of size $\sT=600$. Our results are presented in Fig.~\ref{fig:Second_expapp}, and perhaps surprisingly, Algorithm~\ref{algo:em} with $\Delta_{\s{T}}=100$ achieves a better accumulated reward compared to $\Delta_{\s{T}}=600$ (static algorithm), for small values of $\sT_{\s{static}}$. The main reason for this phenomenon is because for Algorithm~\ref{algo:em} with $\Delta_{\s{T}}=600$, the neighbors of any user might not be well chosen due to small values of $\s{T}_{\s{static}}$ because of which the user will receive poor recommendations throughout the entire time frame. On the other hand, running Algorithm~\ref{algo:em} with $\Delta_{\s{T}}=100$ restarts Algorithm \ref{algo:user_user_CF_noise} at periodic intervals. As a result, the users have a good set of neighbors in some batches and a bad set in others, but the cumulative reward concentrate because the neighbors are independent across the batches. However, the performance of the algorithm with $\Delta_{\s{T}}=600$ improves as $\sT_{\s{static}}$ gets larger since the quality of the estimated neighborhood improves. This experiment hits that it is better to restart the recommendation algorithm periodically, i.e., follow Algorithm~\ref{algo:em} (with $\Delta_{\sT}<\sT$) even in stationary environments. We would like to emphasize that an insufficient number of samples for the initial clustering, results in a worse accumulated reward for $\Delta_{\s{T}}=600$. In practice, however, the number of samples used for the initial clustering might be difficult to determine a-priori. In that situation, we suggest to restart the algorithm periodically with a small value of $\s{T}_{\s{static}}$. Indeed, since the batches are independent, the accumulated reward concentrates due to the law of large numbers. 

\begin{figure}[t]
    
\centering
\includegraphics[width=0.5\textwidth]{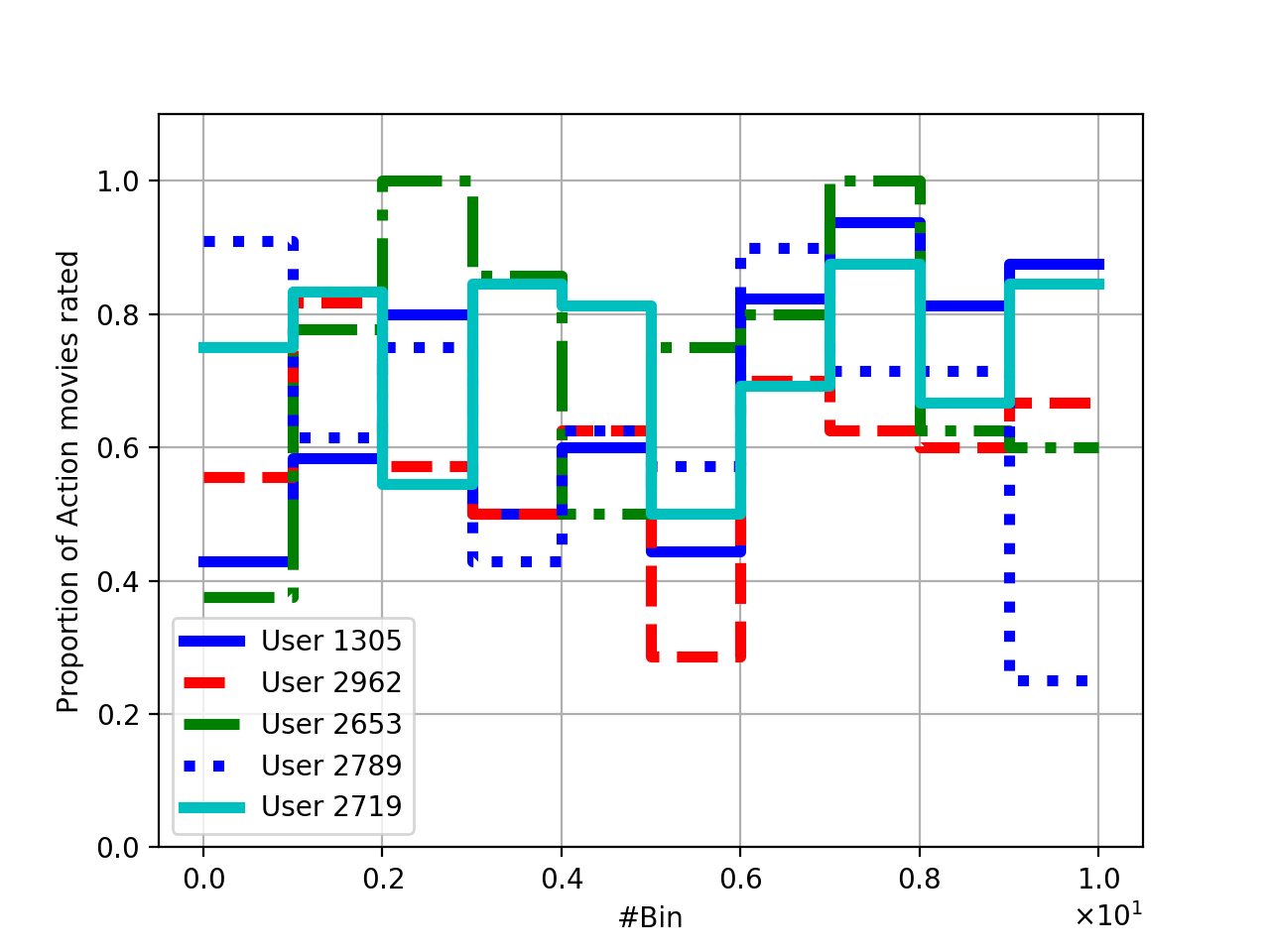}
\caption{The probability $\s{a}_u/(\s{a}_u+\s{r}_u)$ of user $u$ liking a movie with \texttt{Action} tag but not \texttt{Romance} tag, for five different users, across $10$ different bins/windows.}
\label{fig:PreferencesVariations0app}
\end{figure}

\begin{figure}[t!]
    
\centering
\includegraphics[width=0.5\textwidth]{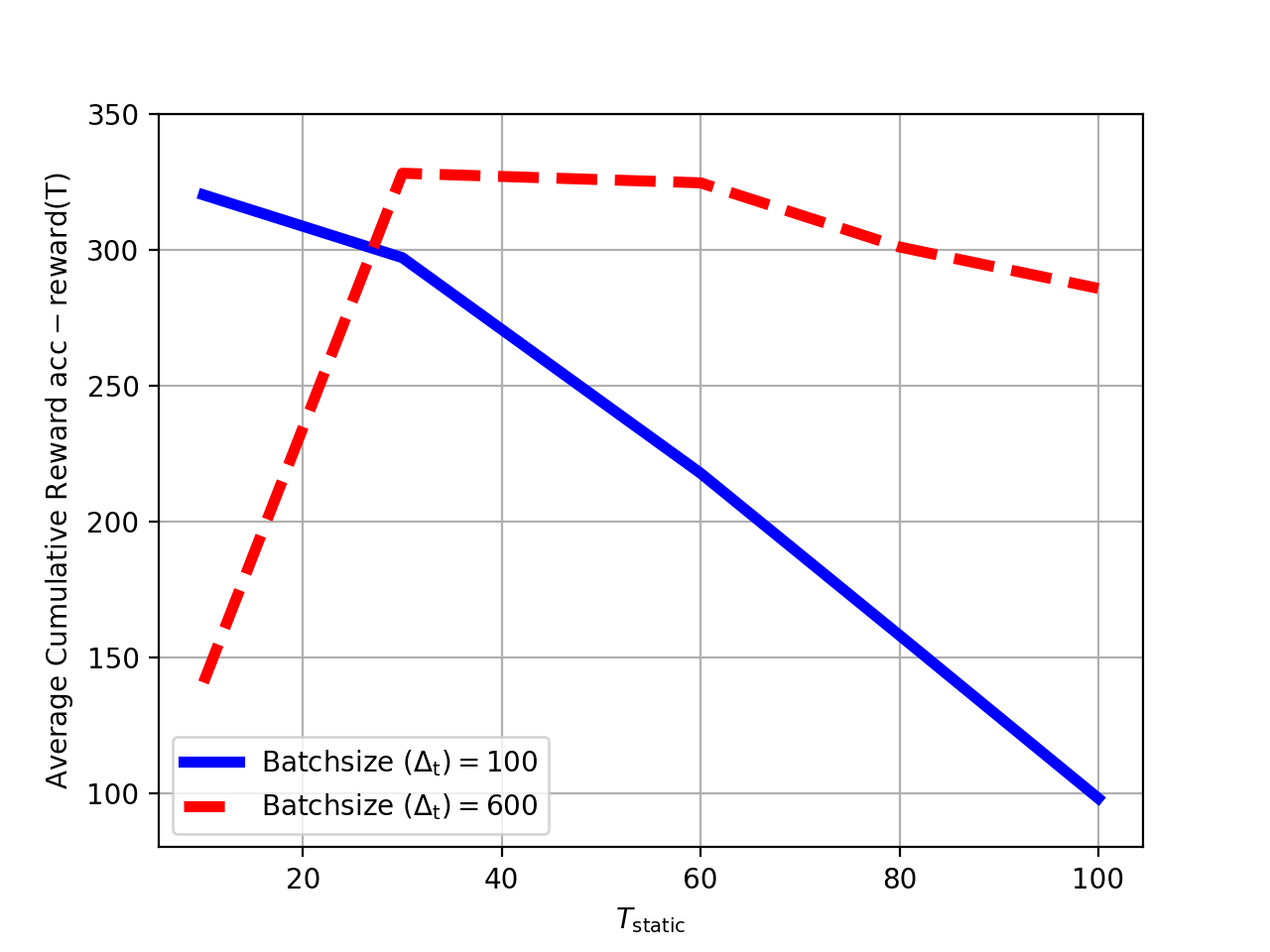}
\caption{Comparison of Average Cumulative Reward $\s{acc-reward(T)}$ for batchsize $(\Delta_{\sT})\in\{100,600\}$ and $\sT_{\s{static}}\in\{10,30,60,80,100\}$.}
\label{fig:Second_expapp}
\end{figure}

Next, we further compare the performance of our algorithm to the static case \cite{Bresler:2014}, and to the Popularity Amongst Friends (PAF) algorithm \cite{Dabeer12}. We consider the same setting as in \cite{Bresler:2014}. In particular, we again quantize movie ratings $\geq4$ as $+1$ (likable), movie ratings $<3$ as $-1$ (unlikable), and missing ratings as $0$. We consider the top $\sN = 250$ and $\sM=500$ users and movies, respectively. This results in $\approx80\%$ nonzero entries among the total number of entries in the rating matrix. There are of course missing entries in the resulted rating matrix. Accordingly, in our simulation if at a certain time, item $i$ was recommended to user $u$, who has not rated that item, we receive $0$ reward. Despite that we will still treat item $i$ as being consumed by user $u$, and accordingly, item $i$ cannot be recommended to user $u$ again. 
Since we allow algorithms to recommend an item to a given user only once, after $\sT=\sM = 500$ time steps, all items have been recommend to all users. As before, the performance of the algorithms is measured in terms of the average cumulative reward up to time $\sT$. 

\begin{figure}[t!]
\begin{center}
\includegraphics[width=0.5\textwidth]{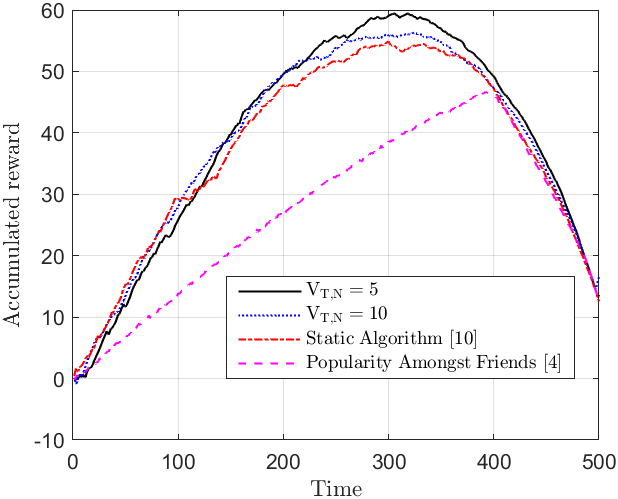}
\end{center}
\caption{The accumulated reward over time achieved by Algorithm~\textsc{Collaborative} and existing recommendation algorithm Popularity Amongst Friends \cite{Dabeer12}, for several values of the variation budget $\s{V}\in\{0,5,10\}$, using Movielens10m dataset.}
\label{fig:RewardVsVariation}
\end{figure}

In the simulation, we run Algorithm~\textsc{Collaborative} with three different values for the variation budget $\s{V}=\s{V_1}=\s{V_2}\in\{0,5,10\}$, and recall that $\s{V}=0$ corresponds to the static case \cite{Bresler:2014}. The results are given in Fig.~\ref{fig:RewardVsVariation}. It is evident that Algorithm~\textsc{Collaborative} significantly outperforms PAF algorithm, a fact which was already observed in \cite{Bresler:2014}. More importantly we see that assuming that $\s{V}=5$, and accordingly recommending in batches, gives the best results among the other values of $\s{V}$, and in particular the static case. Except for coping with variations in the preferences of users, this can be attributed also to \emph{model mismatch}. To wit, the static algorithm \textsc{Recommend} was designed for a certain probabilistic model which may not capture certain phenomena in real-world datasets. Accordingly, it might be the case that the algorithm will ``stuck" on a certain wrong rating trajectory which will hinder the rate at which likeable movies are recommended. Working in batches, and by which letting the algorithm to ``restart" occasionally, may compensate for this mismatch. Finally, note that the reason for the $\cap$-shape of the obtained curves is the fact that after recommending most of the likable items (around $t\approx310$), mostly unlikable movies are left to recommend, until we exhaust all possible movies. 

\section{Conclusion and Outlook}\label{sec:concout}
In this paper, we introduced a novel model for online non-stationary recommendation systems, where users may change their preferences over time adversarially. For this model, we analyzed the performance of a CF recommendation algorithm, and derived a lower bound on its achievable reward. 

We hope our work has opened more doors than it closes. Apart from tightening the obtained lower bound on the reward, there are several exciting directions for future work. First, it is of significant importance to tackle the case where the number of variations is unknown. Devising universal algorithms which are oblivious to the  knowledge of the non-stationarity, and proving theoretical guarantees is quite challenging (see, for example, the recent papers \cite{Karnin16,Auer19a,Chen2019ANA} where the problem of non-stationary MAB with unknown number of variations was considered). Secondly, it is very interesting and technically challenging to derive information-theoretic upper bounds on the performance (reward) of any CF algorithm for the general model introduced in this paper. The results of this paper can be rather directly generalized to one-class recommendation systems where users only rate what they like and never reveal what they dislike. It would be interesting to introduce and analyze models which combines both content/graph information on top of the collaborative filtering information. Also, while in this paper our ultimate goal was to design recommender system which maximize the number of likes, in some applications one might want to take into account other aspects, such as fairness, novelty and multi-stakeholder recommender systems. Formally analyzing such aspects has not been done, and is of practical and theoretical importance. Finally, as was mentioned in the Appendix~\ref{sec:exper} there are several inherent challenges with standard CF datasets used for simulating (non-stationary) online recommender systems. Implementing a real interactive online recommendation system and testing our algorithms over it is an important step towards a complete understanding of CF based recommender systems.

\bibliographystyle{abbrv}
\bibliography{bibfile}

\begin{appendices}

\section{Proof of Theorem~\ref{thm:1}}\label{app:A}
To prove Theorem~\ref{thm:1}, we establish first a few accompanying results. We start with the following lemma which bounds the probability that user of different (same) type have the same response. For this lemma, we assume that users \emph{cannot} change their type over time, and denote the type of user $u\in[\sN]$ by $\calT_u$.
\begin{lemma}[Same Response Lemma]\label{lem:same}
Consider the latent
source model and the incoherence Assumption \textbf{A3}. Let $\ell$ be an item chosen uniformly at random from $[\sM]$. Then, the probability that two users $u$ and $v$ rate $\ell$ in the same way is:
\begin{align}
    \pr\pp{\mathbf{R}_{u\ell}=\mathbf{R}_{v\ell}\vert\calT_u \neq \calT_v}\leq 2\gamma_1\Delta^2+\frac{1}{2},
\end{align}
for users of different types, and,
\begin{align}
    \pr\pp{\mathbf{R}_{u\ell}=\mathbf{R}_{v\ell}\vert\calT_u = \calT_v}\geq2\gamma_2\Delta^2+\frac12,
\end{align}
for users of the same type.
\end{lemma}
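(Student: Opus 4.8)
The plan is a short direct computation followed by an application of Assumption \textbf{A3}. First I would condition on the uniformly random item $\ell\in[\sM]$ and compute the probability that the two independent $\pm1$ ratings agree. Writing $x_\ell\triangleq p_{u\ell}-\tfrac12$ and $y_\ell\triangleq p_{v\ell}-\tfrac12$, and using that $\mathbf{R}_{u\ell}$ and $\mathbf{R}_{v\ell}$ are independent given $\ell$,
\[
\pr\pp{\mathbf{R}_{u\ell}=\mathbf{R}_{v\ell}\,\middle|\,\ell}=p_{u\ell}p_{v\ell}+(1-p_{u\ell})(1-p_{v\ell})=1-p_{u\ell}-p_{v\ell}+2p_{u\ell}p_{v\ell}=\tfrac12+2x_\ell y_\ell ,
\]
where the last equality is the elementary substitution $p_{u\ell}=\tfrac12+x_\ell$, $p_{v\ell}=\tfrac12+y_\ell$.

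Next I would average over $\ell$, noting that the randomness in $\ell$ is independent of the randomness in the ratings, so the conditional identity composes cleanly with the uniform average:
\[
\pr\pp{\mathbf{R}_{u\ell}=\mathbf{R}_{v\ell}}=\tfrac12+\frac{2}{\sM}\sum_{\ell=1}^{\sM}x_\ell y_\ell=\tfrac12+\frac{1}{2\sM}\,\langle 2\bp_u-\mathbf{1},2\bp_v-\mathbf{1}\rangle ,
\]
since $2x_\ell=(2\bp_u-\mathbf{1})_\ell$ and $2y_\ell=(2\bp_v-\mathbf{1})_\ell$, hence $4\sum_\ell x_\ell y_\ell=\langle 2\bp_u-\mathbf{1},2\bp_v-\mathbf{1}\rangle$. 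Finally I would plug in the two inequalities from \textbf{A3}: if $\calT_u\neq\calT_v$ then $\langle 2\bp_u-\mathbf{1},2\bp_v-\mathbf{1}\rangle\le4\gamma_1\Delta^2\sM$, giving the claimed upper bound $\tfrac12+2\gamma_1\Delta^2$; if $\calT_u=\calT_v$ then $\langle 2\bp_u-\mathbf{1},2\bp_v-\mathbf{1}\rangle\ge4\gamma_2\Delta^2\sM$, giving the claimed lower bound $\tfrac12+2\gamma_2\Delta^2$.

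There is essentially no genuine obstacle here: the lemma is a one-line consequence of the agreement-probability identity together with the definition of the (in)coherence constants. The only points requiring care are the bookkeeping of the factors of $2$ relating $p_{ui}-\tfrac12$ to $2p_{ui}-1$, and the (trivial) observation that conditioning on $\ell$ and then averaging is legitimate because the item draw and the rating noise are independent.
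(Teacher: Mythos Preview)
Your proposal is correct and follows essentially the same route as the paper: compute the agreement probability $p_{u\ell}p_{v\ell}+(1-p_{u\ell})(1-p_{v\ell})$, rewrite it in terms of the centered entries to obtain $\tfrac12+\tfrac{1}{2\sM}\langle 2\bp_u-\mathbf{1},2\bp_v-\mathbf{1}\rangle$, and then invoke Assumption~\textbf{A3}. Your bookkeeping of the factor $1/(2\sM)$ is in fact cleaner than the paper's displayed line (which drops a factor of $2$ in the penultimate equality, though the final bound is stated correctly).
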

\begin{proof}
Notice that for two users $u$ and $v$ belonging to different user groups, the probability in question is
\begin{align*}
\pr\pp{\mathbf{R}_{u\ell}=\mathbf{R}_{v\ell}\vert\calT_u \neq \calT_v}&=\frac{1}{\sM}\sum_{i=1}^{\sM}\pp{p_{u,i}p_{v,i}+(1-p_{u,i})(1-p_{v,i})} \\
&\quad=\frac{1}{\sM}\sum_{i=1}^{\sM}\pp{ \frac{(2p_{u,i}-1)(2p_{v,i}-1)}{2}+\frac{1}{2}}\\
&\quad=\frac{1}{\sM} \langle 2\bp_u-\f{1},2\bp_v-\f{1} \rangle+\frac{1}{2} \\
&\quad\le 2\gamma_1\Delta^2+\frac{1}{2},
\end{align*}
where the inequality follows from the incoherence Assumption~\textbf{A3}. Similarly, for two users of the same type,
\begin{align*}
\pr\pp{\mathbf{R}_{u\ell}=\mathbf{R}_{v\ell}\vert\calT_u = \calT_v}&=\frac{1}{\sM} \langle 2\bp_u-\f{1},2\bp_v-\f{1} \rangle+\frac{1}{2} \\
&\ge 2\gamma_2\Delta^2+\frac12,
\end{align*}
where, again, the inequality follows from the coherence Assumption~\textbf{A3}.
\end{proof}
The following lemma gives a condition on the number of random recommendations needed for the cosine-similarity test to output the correct clustering with high probability, assuming that no variations happened during the test. We establish a few notations. Let $\calT_{\s{test}}\subseteq[\sM]$ be a set of $\s{L}$ items chosen uniformly at random from $\sM$. Let $\s{Y}_{u,v}\in\{0,1\}$ be a binary variable indicating whether $(u,v)$ are in the same cluster or not, for $u,v\in[\sN]$. Using the responses $\{\mathbf{R}_{u,i}\}_{u\in[\sN],i\in\calT_{\s{test}}}$, we would like to infer the values of $\s{Y}_{u,v}$ for all $u,v\in[\sN]$. For any pair of distinct users $u,v\in[\sN]$, let $\s{X}_{u,v}$ be the random variable corresponding to the number of items for which $u$ and $v$ had the same responses. Finally, we let
\begin{align}
\hat{\s{Y}}_{u,v} = \begin{cases}
1,\ &\mathrm{if}\;\s{X}_{u,v}\geq\lambda\cdot\s{L}\\
0,\ &\mathrm{otherwise},
\end{cases}\label{eqn:Test}
\end{align}
for some $\lambda\geq0$. We have the following result.
\begin{lemma}{\label{lem:test}}
Consider the latent source model and the incoherence Assumption \textbf{A3}. Let $\delta\in(0,1)$. For any $\s{L}\geq\frac{2\log(3\sN^2/\delta)}{\Delta^4(\gamma_2-\gamma_1)^2}\triangleq\s{T_{static}}$, and any $\lambda\in[\lambda_{-},\lambda_{+}]$ with $\lambda_{-} = 2\gamma_1\Delta^2+\frac12+\sqrt{\frac{2}{\s{L}}\log(3\sN^2/\delta)}$ and $\lambda_{+} = 2\gamma_2\Delta^2+\frac12-\sqrt{\frac{2}{\s{L}}\log(3\sN^2/\delta)}$, the test in \eqref{eqn:Test} discriminates between $\s{Y}_{u,v}=0$ and $\s{Y}_{u,v}=1$, for any pair of users $u,v\in[\sN]$, with probability at least $1-\delta/3$.
\end{lemma}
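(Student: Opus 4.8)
The plan is to reduce the statement to a one-pair concentration bound followed by a union bound over all pairs. Fix distinct users $u,v\in[\sN]$ and recall that $\s{X}_{u,v}=\sum_{\ell\in\calT_{\s{test}}}\mathds{1}[\mathbf{R}_{u\ell}=\mathbf{R}_{v\ell}]$. Because the $\s{L}$ test items are drawn uniformly at random and the ratings are independent across items, the agreement indicators behave like i.i.d.\ $\{0,1\}$ variables whose common mean is exactly the quantity controlled in Lemma~\ref{lem:same}: writing $q_{uv}\triangleq\pr[\mathbf{R}_{u\ell}=\mathbf{R}_{v\ell}]$ for a uniformly random item $\ell$, Lemma~\ref{lem:same} gives $q_{uv}\le 2\gamma_1\Delta^2+\tfrac12$ when $\s{Y}_{u,v}=0$ and $q_{uv}\ge 2\gamma_2\Delta^2+\tfrac12$ when $\s{Y}_{u,v}=1$. (If the $\s{L}$ items are sampled without replacement, the summands are only exchangeable rather than independent, but Hoeffding's inequality for sampling without replacement still applies, so this is a non-issue; alternatively one conditions on $\calT_{\s{test}}$ and then controls $\tfrac1{\s{L}}\sum_{i\in\calT_{\s{test}}}\theta_i$ around $q_{uv}$ by a second concentration step.)

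Next I would handle the two error events separately. Suppose $u,v$ are of different types; then the test in \eqref{eqn:Test} errs precisely when $\s{X}_{u,v}\ge\lambda\s{L}$. Since $\lambda\ge\lambda_{-}=2\gamma_1\Delta^2+\tfrac12+\sqrt{\tfrac2{\s{L}}\log(3\sN^2/\delta)}\ge q_{uv}+\sqrt{\tfrac2{\s{L}}\log(3\sN^2/\delta)}$, Hoeffding's inequality gives $\pr[\s{X}_{u,v}\ge\lambda\s{L}]\le\exp\!\big(-2\s{L}(\lambda-q_{uv})^2\big)\le\exp(-4\log(3\sN^2/\delta))\le\delta/(3\sN^2)$. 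Symmetrically, if $u,v$ are of the same type the test errs only when $\s{X}_{u,v}<\lambda\s{L}$, and since $\lambda\le\lambda_{+}=2\gamma_2\Delta^2+\tfrac12-\sqrt{\tfrac2{\s{L}}\log(3\sN^2/\delta)}\le q_{uv}-\sqrt{\tfrac2{\s{L}}\log(3\sN^2/\delta)}$, the same Hoeffding bound applies to the lower tail. A union bound over the fewer than $\sN^2$ pairs (and the two error directions) then gives that the test discriminates correctly for all pairs simultaneously with probability at least $1-\delta/3$.

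Finally I would verify that the admissible interval $[\lambda_{-},\lambda_{+}]$ is nonempty under exactly the stated hypothesis. Nonemptiness requires $2\gamma_2\Delta^2-2\gamma_1\Delta^2\ge 2\sqrt{\tfrac2{\s{L}}\log(3\sN^2/\delta)}$, i.e.\ $\Delta^4(\gamma_2-\gamma_1)^2\ge\tfrac2{\s{L}}\log(3\sN^2/\delta)$, which is equivalent to $\s{L}\ge\frac{2\log(3\sN^2/\delta)}{\Delta^4(\gamma_2-\gamma_1)^2}=\s{T_{static}}$ --- this is precisely the hypothesis $\s{L}\ge\s{T_{static}}$, and it is the origin of that definition. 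The argument is essentially routine; the only points needing care are the independence structure of the agreement indicators (so that Lemma~\ref{lem:same} legitimately supplies their mean, including the without-replacement technicality above) and the bookkeeping of constants so that the threshold window $[\lambda_{-},\lambda_{+}]$ closes exactly at $\s{L}=\s{T_{static}}$. I do not anticipate a genuine obstacle here.
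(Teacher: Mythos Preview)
Your proposal is correct and matches the paper's proof essentially step for step: invoke Lemma~\ref{lem:same} to bound the mean agreement probability, apply Hoeffding's inequality (the paper likewise notes the without-replacement sampling) to each tail, union bound over the at most $\sN^2$ pairs, and verify that $\lambda_+\ge\lambda_-$ is equivalent to $\s{L}\ge\s{T_{static}}$. The only cosmetic difference is that the paper treats the agreement indicators as $[-1,1]$-valued and applies Hoeffding in the form $\exp\!\big(-(\lambda-\mu)^2\s{L}/2\big)$, yielding exactly $\delta/(3\sN^2)$ per pair, whereas your tighter $[0,1]$-form gives $(\delta/(3\sN^2))^4$; either way the final bound is the same.
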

\begin{proof}
First, it is clear that Lemma~\ref{lem:same} implies that
\begin{align*}
    &\bE\left[\s{X}_{u,v} \vert \s{Y}_{u,v} = 0\right] \le \s{L}\Big(2\gamma_1\Delta^2+\frac12\Big),\\
    &\bE\left[\s{X}_{u,v} \vert \s{Y}_{u,v} = 1\right] \ge \s{L}\Big(2\gamma_2\Delta^2+\frac12\Big).
\end{align*}
Then, we note that $\s{X}_{u,v}$ is a sum of $\s{L}$ random variables in $[-1,1]$, drawn without replacement from $[\sM]$. Accordingly, Hoeffding's inequality gives,
\begin{align}
\pr\pp{\left.\s{X}_{u,v}\geq\lambda_{-}\cdot\s{L}\right|\s{Y}_{u,v} = 0}&\leq \exp\pp{-\frac{\p{\lambda_{-}\cdot\s{L}-\bE\left[\s{X}_{u,v} \vert \s{Y}_{u,v} = 0\right]}^2}{2\s{L}}}\\
&\leq \exp\pp{-\frac{\p{\lambda_{-}-2\gamma_1\Delta^2-\frac12}^2}{2}\s{L}},
\end{align}
and
\begin{align}
\pr\pp{\left.\s{X}_{u,v}\leq\lambda_{+}\cdot\s{L}\right|\s{Y}_{u,v} = 1}&\leq\exp\pp{-\frac{\p{2\gamma_2\Delta^2+\frac12-\lambda_{+}}^2}{2}\s{L}}.
\end{align}
Therefore, taking $\lambda_{-} = 2\gamma_1\Delta^2+\frac12+\sqrt{\frac{2}{\s{L}}\log(3\sN^2/\delta)}$ and $\lambda_{+} = 2\gamma_2\Delta^2+\frac12-\sqrt{\frac{2}{\s{L}}\log(3\sN^2/\delta)}$, we obtain that
\begin{align}
\pr\pp{\left.\s{X}_{u,v}\geq\lambda_{-}\cdot\s{L}\right|\s{Y}_{u,v} = 0}&\leq\frac{\delta}{3\sN^2},\label{eqn:highProb1}
\end{align}
and
\begin{align}
\pr\pp{\left.\s{X}_{u,v}\leq\lambda_{+}\cdot\s{L}\right|\s{Y}_{u,v} = 1}&\leq\frac{\delta}{3\sN^2}.\label{eqn:highProb2}
\end{align}
Picking any $\lambda\in[\lambda_{-},\lambda_{+}]$, we can see that the bounds in \eqref{eqn:highProb1}--\eqref{eqn:highProb2}, with $\lambda_{-}$ and $\lambda_{+}$ replaced by $\lambda$. This is equivalent to $\pr\pp{\left.\hat{\s{Y}}_{u,v}\neq\s{Y}_{u,v}\right|\s{Y}_{u,v} = \ell}\leq\delta/(3\sN^2)$, for $\ell=0,1$. Such $\lambda$ exists if $\lambda_{+}\geq\lambda_{-}$, which holds whenever,
\begin{align}
\s{L}\geq \frac{2\log(3\sN^2/\delta)}{\Delta^4(\gamma_2-\gamma_1)^2}=\s{T_{static}}.
\end{align}
Finally, taking a union bound over all pairs of users (we trivially have at most $\s{N}^2$ such pairs) we conclude that we can correctly infer the values of $\s{Y}_{u,v}$ for all $u,v\in[\sN]$ (and therefore cluster all such pairs of users correctly), with probability at least $1-\delta/3$, as claimed.
\end{proof}

We would like to mention here that the test described above can only distinguish between whether $\s{Y}_{u,v}=1$ or $\s{Y}_{u,v}=0$, \emph{assuming} that users did not change their type during the test. If, however, a test is conducted when there are switches, we can still infer the clustering of those users who have not changed during the test correctly.

We are now in a position to prove Theorem~\ref{thm:1}. With some abuse of notation, let us denote by $\mathsf{reward}(\calB_\ell)$ the expected reward accumulated in batch $\calB_\ell$, i.e.,
\begin{align}
    \mathsf{reward}(\calB_\ell)&\triangleq\bE \left[ \sum_{t \in \calB_\ell}\frac{1}{\sN}\sum_{u=1}^{\sN}\mathds{1}[\mathbf{R}_{u\pi_{u,t}}=1]\right]\nonumber \\
    &=|\calB_\ell|-\bE \left[ \sum_{t \in \calB_\ell}\frac{1}{\sN}\sum_{u=1}^{\sN}\mathds{1}[\mathbf{R}_{u\pi_{u,t}}=0]\right]\nonumber\\
    &\triangleq|\calB_\ell|-\mathsf{regret}(\calB_\ell),
\end{align}
where $\mathsf{regret}(\calB_\ell)$ is the regret accumulated during batch $\calB_\ell$.
As can be seen from Algorithm~\textsc{Collaborative}, we decompose the recommendation horizon $\sT$ to a sequence of batches of size $\Delta_{\mathsf{T}}$ each. To obtain Theorem~\ref{thm:1}, we will relate the total reward/regret with the local reward/regret of the static algorithm \textsc{Recommend}. Specifically, let $\calB_\ell$, for $\ell=1,2,\ldots,\ceil{\sT/\Delta_{\mathsf{T}}}$, denote the $\ell$'th batch of size $\Delta_{\mathsf{T}}$, and let $t_{\calB_\ell}$ be the ending time of batch $\calB_\ell$. 
We will keep track of a set of users $\ca{V}_t \subseteq [\sN]$ which will include all those users for whom we have been able to identify that they have have changed their user groups at some point of time during the batch $\ca{B}_{\ell}$. We initialize $\ca{V}_{t_{\ca{B}_{\ell-1}}+1} = \phi$ at the beginning of the batch to be the empty set. We define
\begin{align}
    \s{V}_{\ca{B}_\ell,1}\triangleq\sum_{t \in \calB_\ell\setminus t_{\calB_\ell}} \mathds{1}\left[\calT_{u}(t) \neq \calT_{u}(t+1),\;\text{for some}\;u\in[\sN]\right], \label{eqn:Vbatchdef}
\end{align}
as the number of variations that have occurred during the batch $\ca{B}_{\ell}$. Furthermore, we let
\begin{align}
    \s{V}_{\ca{B}_\ell,2}\triangleq\frac{1}{\sN}\sum_{u \in [N]}\sum_{t \in \calB_\ell\setminus t_{\calB_\ell}} \mathds{1}\left[\calT_{u}(t) \neq \calT_{u}(t+1)\right], \label{eqn:Vbatchde2}
\end{align}
as the total number of variations that have occurred during the batch $\ca{B}_{\ell}$. For $\tau \in \ca{B}_{\ell}$, we define $\s{Z}_{\tau}$ to be an indicator random variable which is unity if some user switches its type in a window of $2\cdot\s{T_{static}}$ around round $\tau$ within the batch $\ca{B}_{\ell}$. For $\tau\in\ca{B}_{\ell}$, let us denote $\s{W}_{\tau}:=\{\max\{\tau-\s{T_{static}},t_{\calB_{\ell-1}+1}\},\dots,\min\{\tau+\s{T_{static}},t_{\calB_{\ell}}\}\}$ as the window of size $2\cdot\s{T_{static}}$ around $\tau$. Then, note that $\s{Z}_{\tau}$ can be written as
\begin{align}
    \s{Z}_{\tau}=\mathds{1}\left[\sum_{u\in[\sN]}\sum_{ t \in \s{W}_{\tau}}\mathds{1}\left[\calT_{u}(t) \neq \calT_{u}(t+1) \right]>0 \right].\label{eqn:Zdef}
\end{align}

As can be seen from Algorithm~\textsc{Recommend}, at every round in each batch, we start a test with probability $1/\sqrt{\Delta_{\sT}}$, which involves recommending randomly sampled items to every user for $\s{T_{static}}$ rounds. After each such test, we can use Lemma~\ref{lem:test} to partition the set of users. In addition, in the fourth step of Algorithm~\textsc{Recommend} we conduct a \emph{reference test} at the beginning of the batch. In the sequel, we denote this test by $\s{Test}_0$, and further denote the $(j+1)^{\s{th}}$ test by $\s{Test}_j$. The partition induced by the $(j+1)^{\s{th}}$ test is denoted by $\ca{P}_{\s{Test}_j}$. By comparing the partitions $\ca{P}_{\s{Test}_j}$ and $\ca{P}_{\s{Test}_0}$, we will be able to partially identify users who have changed their user groups in the batch. This is done in Algorithm~\textsc{Test}. We will call those users who have changed their user groups in a particular batch as \textit{bad} users and those users who have not changed their user groups throughout the batch as \textit{good} users. Moreover, a user is also \textit{good} until he changes his user group and will be denoted as \textit{bad} from the round he changes his group. In order to bound the regret over each batch we will consider the following three cases:
\begin{itemize}
    \item \underline{\textbf{Case 1:}} Consider the situation where at least $2/3$ of the users of any particular user group have changed their user group. We denote this event by $\ca{E}_1$. In such a case, we will upper bound the regret in the batch $\ca{B}_{\ell}$ by $\s{\Delta}_T$. Notice that, since $\s{V}_{\ca{B}_{\ell},2} \ge \frac{2\nu}{3}$, therefore conditioned on $\calE_1$, we have $\s{regret}(\ca{B}_{\ell}) \le \frac{3\Delta_{\s{T}}\s{V}_{\ca{B}_{\ell},2}}{2\nu}$. 
    \item \underline{\textbf{Case 2:}} In this case, we will assume that for every user group, at most $1/3$ of the users change their user groups in the batch. For any test $\ca{P}_{\s{Test}_j}$, notice that we can actually end up with more than $\s{K}$ clusters (say we have $\s{K}'$ clusters) because of variations. In that case, we will identify all the users in the smallest $\s{K}'-\s{K}$ clusters as users who have changed their user group. Note that, it is possible that we make a mistake in this process because one of the clusters in the smallest $\s{K}'-\s{K}$ clusters might correspond to good users who have not changed their user group. This, however, must mean that a larger cluster among the largest $\s{K}$ clusters must correspond to users who have changed. This in turn implies that at least $\frac{2\nu \s{N}}{3}$ users have changed since the size of the smaller cluster corresponding to users who do not change their group throughout the batch $\ca{B}_{\ell}$ is at least $\frac{2\nu \sN}{3}$. We will denote this event by $\ca{E}_2$. As in the previous case, we trivially upper bound the regret in the batch $\ca{B}_{\ell}$ by $\Delta_{\s{T}}$, and similarly to Case 1, we have $\s{regret}(\ca{B}_{\ell})\le \frac{3\Delta_{\s{T}}\s{V}_{\ca{B}_{\ell},2}}{2\nu}$, conditioned on $\calE_2$. 
    \item \underline{\textbf{Case 3:}} In this case, as in the previous case, we assume that for every user group, at most $1/3$ of the users change their user groups in the batch. Contrary to the previous case, we will also assume that in every test with more than $\s{K}$ clusters (say, $\s{K}'$ clusters), the users in the smallest $\s{K}'-\s{K}$ users correspond to users who have changed their user group. For a future test $j$ started at round $\tau_{\s{test},j}$ such that $\s{Z}_{\tau_{\s{test},j},u}=0$, we will compare the partitions $\ca{P}_{\s{Test}_0}$ and $\ca{P}_{\s{Test}_j}$ by establishing a bijective mapping between the clusters of the two partitions. For every cluster $\ca{C}$ in $\ca{P}_{\s{Test}_0}$, we can find a cluster $\ca{C}'$ in $\ca{P}_{\s{Test}_j}$ such that at least two-thirds of the elements in $\ca{C},\ca{C'}$ are common. Subsequently, for all those users in $\ca{C}$ which are not present in $\ca{C}'$, we correctly identify them as users who have changed their user groups. For a pair of distinct users $(u,v) \subset [\sN]\times [\sN]$ belonging to the same user group at the beginning of the batch $\ca{B}_{\ell}$, we call them \textit{interesting} if one of them have changed their user group. Note that, for any pair of interesting users $(u,v)$ where one of them have changed their user group at any round after the reference test is conducted and remains in different user group before the $(j+1)^{\s{th}}$ test is started will belong to different clusters in $\ca{P}_{\s{Test}_j}$. Note that it is possible that $\s{Z}_{t_{\ca{B}_{\ell-1}}+1,u}=1$, i.e., some user $u$ might change their user group during the first $\s{T_{static}}$ rounds when the reference test is being conducted. Since we can label the top $\s{K}$ clusters (by the corresponding user-group) returned by the reference test as we know that two-thirds users of every user group did not change. We denote by $\ca{P}_{\s{Test}_0}(u)$ the cluster (label) $u$ belongs to in the partition returned by the reference test $\s{Test}_0$. Let us define an indicator random variable $\s{L}_u$ which is unity if user $u$ has changed his user group in the first $\s{T_{static}}$ rounds. Consider such a user $u$ for which $\s{L}_u=1$. In that case, three things are possible at the end of the reference test:
    \begin{enumerate}
        \item $u$ might belong to the smallest $\s{K}'-\s{K}$ clusters in the reference test in which case $u$ is identified as a user who has changed his user group and he is not involved in the main algorithm started after the reference test, i.e., $u$ is added to the set $\ca{V}_{\s{T_{static}}}$. We define an indicator random variable $\s{X}_{u,1}$ which is unity if user $u$ has changed his user group during the first $\s{T_{static}}$ rounds in the batch, and is returned in the smallest $\s{K}'-\s{K}$ clusters at the end of the reference test.
        \item $u$ belongs to the cluster corresponding to his new user group in which case we will not be able to infer that $u$ has changed his user group. In this case, we will consider $u$ to be a \textit{good} user unless he changes his user group later. We will consider his user group at the end of the reference test ($\ca{P}_{\s{Test}_0}(u)$ which is same as $\ca{T}_u(t_{\ca{B}_{\ell-1}}+1+\s{T_{static}})$) to be his actual user group. We will call this a \textit{special case} and we define an indicator random variable $\s{X}_{u,2}\triangleq\mathds{1}[\ca{P}_{\s{test}_0}(u) =\ca{T}_u(t_{\ca{B}_{\ell-1}}+1+\s{T_{static}})]$ which is unity if user $u$ changes his user group during the first $\s{T_{static}}$ rounds in the batch, and belongs to his final user group (the user group he belongs to at the end of the reference test).
        \item $u$ remains in his original user group (or an intermediate user group if he changes his user group multiple times during the reference test). We define an indicator random variable $\s{X}_{u,3}$ which is unity if the user changes his user group in the first $\s{T_{static}}$ rounds and does not belong to his final user group (the user group he belongs to at the end of the reference test) at the end of the reference test, i.e., $\s{X}_{u,3}\triangleq\mathds{1}[\ca{P}_{\s{test}_0}(u) \neq\ca{T}_u(t_{\ca{B}_{\ell-1}}+1+\s{T_{static}})]$. For a round $\tau>\s{T_{static}}$ in $\calB_\ell$, we will define an indicator random variable $\s{J}_{u,\tau}=\mathds{1}[\ca{T}_u(\tau)\neq \ca{P}_{\s{Test}_0}(u)]$, which is unity if user $u$ is in a different group at round $\tau$ than the user group of $u$ that was returned by the reference test. 
    \end{enumerate}
\end{itemize}
We are now in a position to bound the regret over each batch. To that end, we will decompose the regret into a few terms and analyze the contribution of each term separately. First, as we described above conditioned on Cases 1 and 2, namely, $\calA\triangleq\calE_1\cup\calE_2$ we have
\begin{align}
\mathsf{regret}(\calB_\ell\vert\calA)&\triangleq \frac{1}{\sN}\sum_{t \in \ca{B}_{\ell}\setminus\calT_{\s{test,\ell}}} \sum_{u \in [\sN]} \bb{E} \left[\mathds{1}\left[\mathbf{R}_{u,\pi_{u,t}} = 0 \mid \ca{A} \right] \right]\\
&\leq \frac{3\Delta_{\sT}\s{V}_{\ca{B}_{\ell},2}}{2\nu }.\label{eqn:A0}
\end{align}
Next, we analyze Case~3, where we condition on $\calA^c$, namely,
\begin{align}
\mathsf{regret}(\calB_\ell\vert\calA^c) \triangleq \frac{1}{\sN}\sum_{t \in \ca{B}_{\ell}\setminus\calT_{\s{test,\ell}}} \sum_{u \in [\sN]} \bb{E} \left[\mathds{1}\left[\mathbf{R}_{u,\pi_{u,t}} = 0 \mid \ca{A}^c \right] \right].
\end{align}
We do that by considering each of the sub-cases listed above.
\subsection{Variations When Testing}
We bound the regret for those rounds in the batch for which $\s{Z}_{\tau}=1$. Specifically, for a round $\tau\in\ca{B}_{\ell}$, we denote the event $\ca{E}_{\tau,1}$ when $\s{Z}_{\tau}=1$, which by definition imply that there is a variation in a window of size $2\cdot\s{T_{static}}$ around round $\tau$ for some user. In particular, using the definitions in \eqref{eqn:Vbatchdef} and \eqref{eqn:Zdef}, we note that
\begin{align}
\sum_{\tau \in \ca{B}_{\ell}\setminus\calT_{\s{test,\ell}}}\s{Z}_{\tau}&\leq \sum_{\tau \in \ca{B}_{\ell}}\sum_{ t \in \s{W}_{\tau}}\mathds{1}\left[\calT_{u}(t) \neq \calT_{u}(t+1) ,\;\text{for some }u\in[\sN]\right]\\
&\leq 2\cdot\s{V}_{\ca{B}_{\ell},1}\cdot\s{T_{static}}.
\end{align}
Therefore, we can bound the regret in those rounds and users where $\s{Z}_{\tau}=1$ by 
\begin{align}
\s{A}_2&\triangleq\frac{1}{\sN}\bb{E} \left[\sum_{t \in \ca{B}_{\ell}\setminus\calT_{\s{test,\ell}}} \sum_{u \in [\sN]:\s{Z}_{t} = 1} \mathds{1}\left[\mathbf{R}_{u,\pi_{u,t}} = 0 \mid \ca{A}^c \right] \right] \\ 
& \le \bb{E} \left[\sum_{\tau \in \ca{B}_{\ell}} \mathds{1}\left[\s{Z}_{t}=1\right] \right]\\
&\le 2\cdot\s{V}_{\ca{B}_{\ell},1}\cdot\s{T_{static}}.\label{eqn:A1}
\end{align}
\subsection{Regret Due To Testing}
We bound the regret for those rounds where we test in Algorithm~\textsc{Recommend}. Specifically, for a round $\tau \in \ca{B}_{\ell}$, we define the indicator random variable $\s{Y}_{\tau}$ which is unity when a test is being conducted at the round $\tau$. We have
\begin{align}
\s{A}_3&\triangleq \frac{1}{\sN}\bb{E} \left[\sum_{t \in \ca{B}_{\ell}\setminus\calT_{\s{test,\ell}}: \s{Y}_{\tau} = 1} \sum_{u \in [\sN]} \mathds{1}\left[\mathbf{R}_{u,\pi_{u,t}} = 0 \mid \ca{A}^c \right] \right] \\ 
& \le \bb{E}\left[\sum_{t\in \ca{B}_{\ell}} \mathds{1}\left[\s{Y}_{\tau}=1\right] \right]\\
&\le \Delta_{\s{T}}\cdot p\cdot\s{T_{static}},\label{eqn:A2}
\end{align}
where we have used the fact that $\pr[\s{Y}_{\tau}=1] = \pr[\tau\in\mathrm{Test}] = p$, $|\calB_\ell|=\Delta_{\sT}$, and each test takes $\s{T_{static}}$ rounds.
\subsection{Undetected Bad Users}
For a user $u \in [\sN]$, we define an indicator random variable $\s{B}_{u,t}$ which is unity if the user is not included in the set of bad users $\ca{V}_t$ at round $t \in \ca{B}_{\ell}$. Furthermore, for a round $t$ after the reference test, namely, $t\in\calB_\ell\setminus\calT_{\s{test},\ell}$, where
$\calT_{\s{test},\ell}\triangleq[t_{\ca{B}_{\ell-1}}+1,\ldots,t_{\ca{B}_{\ell-1}}+\s{T_{static}}]$, define an indicator random variable $\s{H}_t$ which is unity if there is a \textit{bad} user which is undetected (or, untested) involved in the algorithm. As we explain Below this random variable can be decomposed into the union of three sub-cases which we discussed above. For any round $t\in\calB_\ell\setminus\calT_{\s{test},\ell}$, we have:
\begin{itemize}
\item A user $u$ who satisfies $\s{L}_u=1,\s{B}_{u,t}=1,\s{X}_{u,3}=1,\s{J}_{u,t}=1$ and $\s{Z}_{t} = 0$, is one who has changed his user group in the first $\s{T_{static}}$ rounds in the batch, was not in his final user group at the end of the reference test, and his user group at round $t$ is different from his user group that was returned by the reference test, i.e., 
\begin{align*}
\ca{T}_u(t_{\ca{B}_{\ell-1}}+1+\s{T_{static}}) \neq \ca{P}_{\s{Test}_0}(u)
\quad \text{and} \quad 
\ca{T}_u(t) \neq \ca{P}_{\s{Test}_0}(u).
\end{align*}
\item A user $u$ who satisfies $\s{L}_u=1,\s{B}_{u,t}=1,\s{X}_{u,2}=1,\s{J}_{u,t}=1$ and $\s{Z}_{t} = 0$, is one who has changed his user group in the first $\s{T_{static}}$ rounds in the batch, and his user group at the end of the reference test is also same as the one provided by the estimate of the reference test, but his user group at round $t$ is different from his user group at the end of the reference test, i.e., 
\begin{align*}
\ca{T}_u(t_{\ca{B}_{\ell-1}}+1+\s{T_{static}}) = \ca{P}_{\s{Test}_0}(u)
\quad \text{and} \quad 
\ca{T}_u(t) \neq \ca{P}_{\s{Test}_0}(u).
\end{align*}
\item A user $u$ who satisfies $\s{L}_u=0,\s{B}_{u,t}=1,\s{J}_{u,t}=1$ and $\s{Z}_{t} = 0$ is one who has not changed his user group in the first $\s{T_{static}}$ rounds in the batch, but his user group at round $t$ is different from his user group at the beginning of the batch, i.e., 
\begin{align*}
&\ca{T}_u(t_{\ca{B}_{\ell-1}}+1+\s{T_{static}}) = \ca{T}_u(t), \quad \text{for}\;t \in\calT_{\s{test},\ell},\\
&\ca{T}_u(t) \neq \ca{P}_{\s{Test}_0}(u).
\end{align*}  
\end{itemize}
Given the above three sub-cases, it is clear that $\s{H}_t$ for $t\in\calB_\ell\setminus\calT_{\s{test},\ell}$, can be written as
\begin{align}
\s{H}_t &= \mathds{1}\left[\sum_{u \in [\sN]}\mathds{1}\left[\s{L}_u=1,\s{B}_{u,t}=1,\s{X}_{u,3}=1,\s{J}_{u,t}=1,\s{Z}_{t} = 0 \right]\right.\nonumber\\
&\quad\quad\quad+ \sum_{u \in [\sN]}\mathds{1}\left[\s{L}_u=0,\s{B}_{u,t}=1,\s{J}_{u,t}=1, \s{Z}_{t} = 0 \right] \nonumber\\
&\left.\quad\quad\quad+\sum_{u \in [\sN]} \mathds{1}\left[\s{L}_u=1,\s{B}_{u,t}=1,\s{X}_{u,2}=1,\s{J}_{u,t}=1, \s{Z}_{t} = 0 \right]>0\right].
\end{align}
Basically, $\s{H}_t$ indicates whether at time $t\in\calB_{\ell}\setminus\calT_{\s{test},\ell}$ a bad user is present or not. Accordingly, we bound the regret in this case as follows
\begin{align}
\s{A}_4&\triangleq\frac{1}{\sN}\bb{E} \left[\sum_{t \in \calB_{\ell}\setminus\calT_{\s{test},\ell}:\s{H}_t=1} \sum_{u \in [\sN]} \mathds{1}\left[\mathbf{R}_{u,\pi_{u,t}} = 0 \mid \ca{A}^c\right] \right] \\
&\le \bb{E}\sum_{t \in \calB_{\ell}\setminus\calT_{\s{test},\ell}}\mathds{1}\left[\s{H}_t=1\right]\\
& = \bE\sum_{t\in\calB_{\ell}:\exists u\in[\sN],\s{J}_{u,t}=1}\s{G}_t,
\label{eqn:A3_1}
\end{align}
where in \eqref{eqn:A3_1} we sum over all those rounds where some user changed its type, and $\s{G}_t$ counts the number of rounds it takes to detect the bad users. This random variable is clearly stochastically dominated by by a Geometric random variable with mean $1/p$. Indeed, a test can start at every round with probability $p$, and a test that starts at a round $\s{Z}_{t}=0$ will certainly reveal that the user is in a different user group than the one returned in the reference test $\ca{P}_{\s{test}_0}$. Accordingly, we will add that user to the set $\ca{V}_{t_{\calB_{\ell-1}}+1+\s{T_{static}}+t}$. Therefore, we obtain that,
\begin{align}
\s{A}_4&\leq \bE\sum_{t\in\calB_{\ell}:\exists u\in[\sN],\s{J}_{u,t}=1}\s{G}_t\leq \bE\sum_{t\in\calB_{\ell}:\exists u\in[\sN],\s{J}_{u,t}=1}\frac{1}{p}\leq \frac{\s{V}_{\calB_{\ell},1}}{p}.
\label{eqn:A3}
\end{align}
\subsection{The ``Static" Regret}
It remains to bound the regret for those round where we do not test and all bad users are detected, i.e.,
\begin{align}
\s{A}_5\triangleq\frac{1}{\sN}\bb{E} \left[\sum_{t \in  \calB_{\ell}\setminus\calT_{\s{test},\ell}:\s{H}_t=0,\s{Y}_t = 0} \sum_{u \in [\sN]\setminus \ca{V}_t} \mathds{1}\left[\mathbf{R}_{u,\pi_{u,t}} = 0 \mid \ca{A}^c\right] \right].
\end{align}
We shall refer to this regret as the \textit{static} regret. This static case was studied in \cite{Bresler:2014}, where algorithm \textsc{Recommend} was analyzed thoroughly. As discussed before, in \cite{Bresler:2014} it was assumed that users of the same user-type have the same exact preference vectors, while in this paper we assume the weaker coherence Assumption~\textbf{A3}. Nonetheless, except for a few technical differences (which we highlight in the proof of the following result), our analysis relies on the proof of Theorem~1 in \cite{Bresler:2014}.
\begin{lemma}[No Variations]\label{thm:2}
Let $\delta\in(0,1)$, and consider the latent source model and assumptions \textbf{A1}--\textbf{A3}. Also, assume that $\sN=\Omega\p{\frac{\sM}{\nu}\log\frac{1}{\delta}+\p{\frac{3}{\delta}}^{1/\alpha}}$. Then, for any $\s{T_{static}}\leq\Delta_{\sT}\leq\mu\cdot\sM$, we have
\begin{align}
\s{A}_5&\leq (\Delta_{\sT}-\s{T_{static}})\cdot\delta.
\end{align}
\end{lemma}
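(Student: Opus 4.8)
The plan is to derive the bound on $\s{A}_5$ from the analysis of the static algorithm \textsc{Recommend} in \cite{Bresler:2014}. Conditioning on $\{\s{H}_t=0\}$ and restricting the inner sum to $u\in[\sN]\setminus\calV_t$ is exactly what strips all non-stationary effects out of $\s{A}_5$: on those rounds every user that enters the recommendation rule behaves, for all practical purposes, as a static user whose type has already been correctly identified, so we are back in the static latent source model. The first step would be to condition on the event that the reference test $\s{Test}_0$ run at the start of batch $\calB_\ell$ succeeds; by Lemma~\ref{lem:test}, with the prescribed $\s{T_{static}}$ and any $\lambda\in(\lambda_{-},\lambda_{+})$, the partition $\calP_0$ coincides with the true clustering with probability at least $1-\delta/3$, so that $\mathsf{neigh}(u,t)=\calP_0(u)\cap\calS_t$ contains only users of $u$'s type throughout the batch, and (having conditioned away Cases~1 and~2) still has size $\Omega(\nu\sN)$.

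\textbf{Joint exploration and concentration of scores.} Next I would re-run the ``joint exploration'' argument of \cite{Bresler:2014}: the random-exploration rounds occur independently with probability $\s{p_R}=\sN^{-\alpha}$, and in each of them \emph{every} user rates a fresh uniformly random unrated item. Using the hypothesis $\sN=\Omega\!\big(\tfrac{\sM}{\nu}\log\tfrac1\delta+(3/\delta)^{1/\alpha}\big)$, a Chernoff bound together with a union bound over users and items shows that, with probability at least $1-\delta/3$, after a short cold-start portion of the batch every good user $u$ has at least $\Omega\!\big(\Delta^{-2}\log(\sN\sM/\delta)\big)$ neighbors who have already rated each of a constant fraction of the items. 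On this event Hoeffding's inequality forces the score $\hat p_{ui}^{(t)}$ of \eqref{eqn:score} to stay within $\Delta/2$ of the neighborhood-averaged probability $|\mathsf{neigh}(u,t)|^{-1}\sum_{v\in\mathsf{neigh}(u,t)}p_{vi}$, uniformly over active users and items. One then argues that the score-maximizing unrated item is likable for $u$, bounds the per-round per-user probability of a mistake by $\delta$ after combining the (at most three) failure events, sums over the at most $\Delta_{\sT}-\s{T_{static}}$ non-test rounds, and divides by $\sN$ to obtain $\s{A}_5\le(\Delta_{\sT}-\s{T_{static}})\delta$. Here Assumption~\textbf{A1} supplies the $\Delta$-gap driving the concentration, while $\Delta_{\sT}\le\mu\sM$ and Assumption~\textbf{A2} guarantee that a likable item always remains available to recommend.

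\textbf{The main obstacle.} The genuinely new ingredient, and what I expect to be the crux, is replacing the identity $p_{vi}=p_{ui}$ for all $v\in\mathsf{neigh}(u,t)$ (assumed in \cite{Bresler:2014}) by the inner-product coherence of Assumption~\textbf{A3}. The key claim to establish is that Assumption~\textbf{A3} still forces any two same-type users $u,v$ to agree --- in the sense of both liking or both disliking --- on a positive fraction of the items: writing $\langle 2\bp_u-\mathbf{1},2\bp_v-\mathbf{1}\rangle=\sum_i(2p_{ui}-1)(2p_{vi}-1)$, every agreeing coordinate contributes at most $1$ to the sum and every disagreeing one at most $-4\Delta^2$ (by Assumption~\textbf{A1}), so the lower bound $4\gamma_2\Delta^2\sM$ on the inner product turns into an upper bound on the number of disagreements, hence a lower bound on the number of commonly-likable items. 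Intersecting these agreement sets over a whole neighborhood --- and checking, again using $\nu$ and the lower bound on $\sN$, that the resulting pool of commonly-likable items stays of size $\Omega(\sM)$ and is not exhausted within the $\Delta_{\sT}\le\mu\sM$ recommendations of the batch --- yields a stock of items whose neighborhood-averaged probability is safely above $1/2$, while the ``confusing'' items, on which $u$ and its neighbors disagree, have to be controlled so that the score maximizer never systematically prefers them over genuinely likable items. Carrying this averaging argument through faithfully, in place of invoking exact equality of the preference vectors, is precisely the ``few technical differences'' alluded to above; with it in hand the remainder of the proof follows that of Theorem~1 in \cite{Bresler:2014}.
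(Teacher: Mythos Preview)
Your overall architecture --- condition on the reference test succeeding via Lemma~\ref{lem:test}, use a Chernoff bound to guarantee each item has $\Omega(\nu t\sN^{1-\alpha}/\sM)$ exploration ratings from the neighborhood, then Hoeffding to show the score correctly classifies each item, combine the (three) failure events into a per-round error of at most $\delta$ using $\sN=\Omega(\tfrac{\sM}{\nu}\log\tfrac1\delta+(3/\delta)^{1/\alpha})$, and sum over the $\Delta_{\sT}-\s{T_{static}}$ post-test rounds --- is exactly what the paper does.

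Where you diverge is your third paragraph. You correctly identify the issue: under Assumption~\textbf{A3} same-type neighbors $v$ need not have $p_{vi}=p_{ui}$, so one cannot na\"ively model $\hat p_{ui}^{(t)}$ by $\mathsf{Binomial}(\s{G},p_{ui})/\s{G}$. The paper, however, \emph{does not} carry out any such repair: its proof simply writes ``$\hat{p}_{u\ell}$ stochastically dominates $\tilde{p}_{ui}\triangleq\mathsf{Binomial}(\s{G},p_{ui})/\s{G}$'' and proceeds verbatim as in the identical-preference setting of \cite{Bresler:2014}; the promised ``few technical differences'' are not actually worked out. So you are being more careful than the paper here, not less.

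That said, your proposed fix has a quantitative gap. You want to intersect the pairwise agreement sets over the entire neighborhood and retain a pool of $\Omega(\sM)$ commonly-likable items. But the neighborhood has $\Omega(\nu\sN)$ members, and under \textbf{A1}+\textbf{A3} each pair $(u,v)$ may disagree on a \emph{constant fraction} of items; the union of disagreement sets over $\Omega(\sN)$ neighbors can then cover all of $[\sM]$, and the intersection can be empty. If you want to salvage this route, you should argue about the \emph{average} neighborhood preference $\bar p_i=|\mathsf{neigh}(u,t)|^{-1}\sum_v p_{vi}$ (this is what $\hat p_{ui}^{(t)}$ actually concentrates around) rather than demand unanimity --- but then you still owe an argument that the $\bar p$-maximizing unrated item is likable \emph{for $u$}, which \textbf{A3} by itself does not hand you.
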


\subsection{Collecting Terms}

We finally collect all the above bounds to obtain the result stated in Theorem~\ref{thm:1}. Specifically, using \eqref{eqn:A0}, \eqref{eqn:A1}, \eqref{eqn:A2}, \eqref{eqn:A3}, and Theorem~\ref{thm:1}, we obtain
\begin{align}
\mathsf{regret}(\calB_\ell)&\leq \s{T_{static}}\cdot(1-\mu)+(\Delta_{\sT}-\s{T_{static}})\cdot\delta+2\cdot\s{V}_{\ca{B}_{\ell},1}\cdot\s{T_{static}}+p\cdot\Delta_{\sT}\cdot\s{T_{static}}\nonumber\\
&\quad+\frac{\s{V}_{\ca{B}_{\ell},1}}{p}+\frac{3\Delta_{\sT}\s{V}_{\ca{B}_{\ell},2}}{2\nu}\label{eqn:ubl0}\\
&\leq\delta\cdot\Delta_{\sT}+\s{T_{static}}\cdot(1-\delta-\mu)+2\cdot\s{V}_{\ca{B}_{\ell},1}\cdot\s{T_{static}}+p\cdot\Delta_{\sT}\cdot\s{T_{static}}\nonumber\\
&\quad+\frac{\s{V}_{\ca{B}_{\ell},1}}{p}+\frac{3\Delta_{\sT}\s{V}_{\ca{B}_{\ell},2}}{2\nu},\label{eqn:ubl}
\end{align}
where the first term at the r.h.s. of \eqref{eqn:ubl0} is the regret due to the first $\s{T_{static}}$ rounds where we recommend random items. Since \eqref{eqn:ubl} is true for every batch $\calB_\ell$, we can sum-up over $\ell$, and obtain that
\begin{align}
\mathsf{regret}(\sT)&\leq\sum_{\ell=1}^{\ceil{\sT/\Delta_{\mathsf{T}}}}\mathsf{regret}(\calB_\ell)\\
&\leq \delta\cdot\sT+\frac{\sT}{\Delta_{\sT}}\s{T_{static}}\cdot(1-\delta-\mu)+2\cdot\s{V_{1}}\cdot\s{T_{static}}+p\cdot\sT\cdot\s{T_{static}}+\frac{\s{V}_1}{p}+\frac{3\Delta_{\sT}\s{V_2}}{2\nu}.
\end{align}
Minimizing the r.h.s. of the above inequality w.r.t. $p$, we obtain that its optimal value is $p^{\star}=\sqrt{\s{V}_1/(\s{T}\cdot\s{T_{static}})}$. Therefore,
\begin{align}
\mathsf{regret}(\sT)&\leq \delta\cdot\sT+\frac{\sT}{\Delta_{\sT}}\s{T_{static}}\cdot(1-\delta-\mu)+2\cdot\s{V_{1}}\cdot\s{T_{static}}+2\sqrt{\s{V}_1\cdot\sT\cdot\s{T_{static}}}+\frac{3\Delta_{\sT}\s{V}_2}{2\nu }.\label{eqn:RegBeforeDelta}
\end{align}
It is left to do is to minimize the r.h.s. of the above inequality over $\Delta_\mathsf{T}$. The optimal value is given in a form of a solution for a cubic equation. Alternatively, it turns out that the following choice which minimizes the first three terms at the r.h.s. of \eqref{eqn:RegBeforeDelta} is
\begin{align}
\Delta_{\mathsf{T}}^* = \min\p{\sT,\sqrt{\frac{2\nu\sT}{3\s{V_{2}}}\kappa}},
\end{align}
where $\kappa\triangleq\s{T_{static}}(1-\delta-\mu)$. Substituting this value back in \eqref{eqn:RegBeforeDelta} gives
\begin{align}
\mathsf{regret}(\sT)&\leq \delta\cdot\sT+\max\p{\kappa,\sqrt{\frac{3\s{V_{2}\sT}\kappa}{2\nu}}}+2\cdot\s{V_{1}}\cdot\s{T_{static}}+2\sqrt{\s{V}_1\cdot\sT\cdot\s{T_{static}}}\nonumber\\
&\quad\quad+\min\p{\frac{3\s{V_{2}}\sT}{2\nu},\sqrt{\frac{3\s{V}_{2}\sT\kappa}{2\nu}}},
\end{align}
and so $\mathsf{reward}(\sT) =\sT-\mathsf{regret}(\sT)$ is lower bounded by the same expression as in Theorem~\ref{thm:1}. Note that the condition $\Delta_{\mathsf{T}}>\s{T_{static}}$ in Lemma~\ref{thm:2} boils down to $\sT>\mathsf{T}_{\mathsf{static}}\cdot \max\ppp{1,\frac{\mathsf{3V_{2}}}{2\nu(1-\delta-\mu)}} = \mathsf{T}_{\mathsf{learn}}$. Finally, for $\sT\leq\mathsf{T}_{\mathsf{learn}}$, we get that $\mathsf{reward}(\sT)\geq \mu\cdot\sT$, as claimed.

\subsection{Proof of Lemma~\ref{thm:2}}

To prove the result in Lemma~\ref{thm:2}, it is suffice to lower bound the probability $\pr\pp{\mathbf{R}_{u\pi_{u,t}}=1,\s{Y}_t=0,\s{H}_t=0}$. To that end, for any $u\in[\sN]$ and $t\in[\sT]$, define
\begin{align}
\calG_{u,t}\triangleq\ppp{|\partial_t(u)|\geq\frac{2\nu\sN}{3}},\label{eqn:GoodSetDef}
\end{align}
where $\partial_t(u)$ is the set of neighbors at time $t$ user $u$ have from the same user-types, respectively. For $t$ large enough the probability of $\calG_{u,t}$ is lower bounded strictly by zero. To show that recall that $|\calT_u(t)|$ is the number of users in user's $u$ type at round $t$. As we argued above at each round we know that $|\calT_u(t)|>\frac{2\nu\sN}{3}$. Also, recall that in the beginning of the batch we devote the first $\s{T_{static}}$ recommendations for creating an initial partition $\calP_0$ of the users into types (see, the the fourth step in Algorithm~\ref{algo:user_user_CF_noise}). We showed in Lemma~\ref{lem:test} that the resulted partition is correct with probability at least $1-\delta/3$, and therefore, $|\partial_t(u)|\frac{2\nu\sN}{3}$ with the same probability, i.e., $\pr[\calG_{u,t}]\geq1-\delta/3$, for $t\in\calB_{\ell}\setminus\calT_{\s{test},\ell}$. 

Next, using the same steps as in the proof of Lemma~2 in \cite{Bresler:2014}, we show that the good neighborhoods have, through random exploration, accurately estimated the probability of liking each item. Thus, we correctly classify each item as likable or not with high probability. In particular, we show Below that
\begin{align}
\pr\pp{\left.\mathbf{R}_{u,\pi_{u,t}}=1,\s{Y}_t=0,\s{H}_t=0\right|\calG_{u,t}}&\geq 1-2\sM\exp\p{-2\frac{\Delta^2\nu t\sN^{1-\alpha}}{3\sM}}-\frac{1}{\sN^\alpha}.\label{eqn:ProbEqual0}
\end{align}
Before proving the above inequality let us first show how we can use it lower bound the regret. Indeed, combining the above inequality with the fact that $\pr[\calG_{u,t}]\geq1-\delta/3$, we get
\begin{align}
\pr\pp{\mathbf{R}_{u,\pi_{u,t}}=1,\s{Y}_t=0,\s{H}_t=0}&\geq 1-2\sM\exp\p{-2\frac{\Delta^2\nu t\sN^{1-\alpha}}{3\sM}}-\frac{1}{\sN^\alpha}-\frac{\delta}{3}.\label{eqn:ProbEqual1}
\end{align}
It can be seen that if the number of users $\sN$ satisfy $\sN=\Omega\p{\frac{\sM}{\nu}\log\frac{1}{\delta}+\p{\frac{3}{\delta}}^{1/\alpha}}$, and of course $t\geq\s{T_{static}}$, then the r.h.s. of \eqref{eqn:ProbEqual1} is at least $1-\delta$, namely, $\pr\pp{\mathbf{R}_{u,\pi_{u,t}}=1,\s{Y}_t=0,\s{H}_t=0}\geq1-\delta$. Therefore, we obtain,
\begin{align}
\s{A}_5&\leq\sum_{t\in\calB_{\ell}\setminus\calT_{\s{test},\ell}}\frac{1}{\sN}\sum_{u=1}^{\sN}\pr\pp{\mathbf{R}_{u,\pi_{u,t}}=0,\s{Y}_t=0,\s{H}_t=0\vert\calA^c}\nonumber\\
&\leq (\Delta_{\sT}-\s{T_{static}})\cdot\delta,
\end{align}
where in the second inequality we have used Assumption~\textbf{A2}. Next, we prove \eqref{eqn:ProbEqual0}. First, we lower bound the number of times an arbitrary item has been rated by the good neighbors of some user $u$, conditioned on the event $\calG_{u,t}$. To that end, note that the number of good neighbors user $u$ has and who have rated item $i$ is stochastically dominated by $\mathsf{Binomial}\p{\frac{2\nu\sN}{3},\frac{t}{\sM\sN^\alpha}}$. Let $\calD$ be the event ``item $i$ has less than $\frac{\nu t\sN^{1-\alpha}}{3\sM}$ ratings from good neighbors of $u$". Then, Chernoff's bound then gives
\begin{align}
\pr\p{\calD}&\leq\pr\p{\mathsf{Binomial}\p{\frac{2\nu\sN}{3},\frac{t}{\sM\sN^\alpha}}\leq\frac{\nu t\sN^{1-\alpha}}{3\sM}}\\
&\leq\exp\p{-\frac{\nu t\sN^{1-\alpha}}{3\sM}}.\label{eqn:calD}
\end{align}
Next, conditioned on $\calG_{u,t}$ and $\calD$ we prove that with high probability when exploiting the algorithm predicts correctly every item as likable or unlikable for user $u$. Recall our definition for the posterior $\hat{p}_{u\ell}$ in \eqref{eqn:score}. Suppose item $i$ is likeable by user $u$, and let $\s{G}\triangleq\frac{\nu t\sN^{1-\alpha}}{3\sM}$. Then, conditioned on $\s{G}$, $\hat{p}_{u\ell}$ stochastically dominates $\tilde{p}_{ui}\triangleq\mathsf{Binomial}(\s{G},p_{ui})/\s{G}$. Then,
\begin{align}
\pr\p{\left.\tilde{p}_{ui}\leq\frac{1}{2}\right|\s{G}} &= \pr\p{\left.\mathsf{Binomial}(\s{G},p_{ui})\leq\frac{\s{G}}{2}\right|\s{G}}\\
& \leq\exp\p{-2\s{G}\Delta^2}\\
&\leq \exp\p{-2\frac{\Delta^2\nu t\sN^{1-\alpha}}{3\sM}},
\end{align}
where the first inequality follows from Hoeffding's inequality, and the second inequality is because $p_{ui}\geq1/2+\Delta$. Using monotonicity, we also have
\begin{align}
\pr\p{\left.\tilde{p}_{ui}\leq\frac{1}{2}\right|\s{G}\geq\frac{\nu t\sN^{1-\alpha}}{3\sM}} &\leq \exp\p{-2\frac{\Delta^2\nu t\sN^{1-\alpha}}{3\sM}}.
\end{align}
Using the same steps we can show that if item $i$ is unlikeable by user $u$ then with the same probability $\tilde{p}_{ui}\geq\frac{1}{2}$. Taking a union bound over all items we get that with probability at least $1-\sM\exp\p{-2\frac{\Delta^2\nu t\sN^{1-\alpha}}{3\sM}}$ our algorithm correctly classifies every item as likable or unlikable for user $u$. We are now in a position to prove \eqref{eqn:ProbEqual0}. Specifically, for user $u$ at time $t$, conditioned on $\calG_{u,t}$ we have shown in \eqref{eqn:calD} that with probability at least $1-\sM\exp\p{-\frac{\nu t\sN^{1-\alpha}}{3\sM}}$ \emph{every} item has more than $\frac{\nu t\sN^{1-\alpha}}{3\sM}$ ratings from good neighbors of $u$. Now, using the fact that with probability at least $1-\sM\exp\p{-2\frac{\Delta^2\nu t\sN^{1-\alpha}}{3\sM}}$ we classify correctly all items, coupled with the fact that we exploit with probability $1-\sN^{-\alpha}$, we get
\begin{align}
\pr\pp{\left.\mathbf{R}_{u,\pi_{u,t}}=1,\s{Y}_t=0,\s{H}_t=0\right|\calG_{u,t}}&\geq 1-\sM\exp\p{-\frac{\nu t\sN^{1-\alpha}}{3\sM}}-\sM\exp\p{-2\frac{\Delta^2\nu t\sN^{1-\alpha}}{3\sM}}\nonumber\\
&\quad\quad-\frac{1}{\sN^\alpha}\\
&\geq 1-2\sM\exp\p{-2\frac{\Delta^2\nu t\sN^{1-\alpha}}{3\sM}}-\frac{1}{\sN^\alpha},
\end{align}
as claimed.

\end{appendices}

\end{document}